\definecolor{page_backgroundcolor}{rgb}{1, 1, 1}
\DeclareRobustCommand{\greektext}{%
  \fontencoding{LGR}\selectfont\def\encodingdefault{LGR}}
\DeclareRobustCommand{\textgreek}[1]{\leavevmode{\greektext #1}}
\numberwithin{equation}{section}
\numberwithin{figure}{section}
\theoremstyle{remark}
\newtheorem*{rem*}{\protect\remarkname}
\newenvironment{lyxcode}
	{\par\begin{list}{}{
		\setlength{\rightmargin}{\leftmargin}
		\setlength{\listparindent}{0pt}
		\raggedright
		\setlength{\itemsep}{0pt}
		\setlength{\parsep}{0pt}
		\normalfont\ttfamily}%
	 \item[]}
	{\end{list}}
\theoremstyle{plain}
\newtheorem{conjecture}{\protect\conjecturename}
\theoremstyle{plain}
\newtheorem{prop}{\protect\propositionname}
\theoremstyle{definition}
\newtheorem{defn}{\protect\definitionname}
\newlist{casenv}{enumerate}{4}
\setlist[casenv]{leftmargin=*,align=left,widest={iiii}}
\setlist[casenv,1]{label={{\itshape\ \casename} \arabic*.},ref=\arabic*}
\setlist[casenv,2]{label={{\itshape\ \casename} \roman*.},ref=\roman*}
\setlist[casenv,3]{label={{\itshape\ \casename\ \alph*.}},ref=\alph*}
\setlist[casenv,4]{label={{\itshape\ \casename} \arabic*.},ref=\arabic*}
\theoremstyle{plain}
\newtheorem{thm}{\protect\theoremname}
\theoremstyle{plain}
\newtheorem{cor}{\protect\corollaryname}
\theoremstyle{plain}
\newtheorem{lem}{\protect\lemmaname}
\providecommand{\casename}{Case}
\providecommand{\conjecturename}{Conjecture}
\providecommand{\corollaryname}{Corollary}
\providecommand{\definitionname}{Definition}
\providecommand{\lemmaname}{Lemma}
\providecommand{\propositionname}{Proposition}
\providecommand{\remarkname}{Remark}
\providecommand{\theoremname}{Theorem}
\begin{document}
\title{The use of knowledge in open-ended systems}
\author{Abigail Devereaux\thanks{Wichita State University}, Roger Koppl\thanks{Syracuse University}}
\maketitle
\begin{abstract}
Economists model knowledge use and acquisition as a cause-and-effect
calculus associating observations made by a decision-maker about their
world with possible underlying causes. Knowledge models are well-established
for static contexts, but not for contexts of innovative and unbounded
change. We develop a representation of knowledge use and acquisition
in open-ended evolutionary systems and demonstrate its primary results,
including that observers embedded in open-ended evolutionary systems
can agree to disagree and that their ability to theorize about their
systems is fundamentally local and constrained to their frame of reference
(what we call ``frame relativity''). The results of our framework
formalize local knowledge use, the ``many-selves'' interpretation
of reasoning through time, and motivate the emergence of nonlogical
modes of reasoning like institutional and aesthetic codes.
\end{abstract}

\section{Introduction}

The central problem in epistemology is the discovery of a complete
and correct set of statements about the system in which some observer
is embedded, and is well-established for closed systems (Hintikka
1962; Aumann 1999a, 1999b; Samet 1990). Using modal logic, theorists
define common knowledge situations in game theoretic contexts and
in other closed systems in which the universe of possible states has
been pre-stated. However, a thorn in the side of decision theory has
long been the salience in social systems of truly novel possibilities.
There are more things in heaven and earth than are dreamt of in a
closed system.

Truly novel possibilities are generated by open-ended systems, rendering
questionable the applicability of methods suited to closed systems.
Open-ended systems that generate novel possibilities require embedded
observers to revise and replace theories of the system confidently
employed in previous periods. The epistemological characterization
of observer knowledge embedded within an open-ended evolutionary system
under theory selection remains an open problem without a tractable
solution. 

Open-ended evolutionary (OEE) processes are processes that continually
increase in complexity, generate novel change, and are both unbounded
and innovative (Banzhaf et al 2016; Adams et al 2017; Corominas-Murtra
et al 2018). Technological innovation is an open-ended process borne
of open-ended entrepreneurial action (Arthur 2009; Koppl et al 2023).
At a social level, complex technologies like language can evolve in
an open-ended fashion (Chaitin 2017b). New technological possibilities
and ways to communicate can generate opportunities to improve one's
lot along some metric like income or happiness. Exploiting these new
opportunities requires an individual to acquire knowledge about them.
The process of acquiring and using knowledge about new possibilities
generated by OEE processes is itself open-ended, and therefore cannot
be described by (closed-ended) neoclassical economic theories (Giménez
Roche 2016). Innovation is deeply salient to the lives of decision-making
individuals---the rapid ascendance of large-language models merely
the most recent example of two centuries punctuated by frequent society-shifting
innovations. It is high time to develop a knowledge framework encompassing
of open-endedness. 

Knowledge frameworks allow us to describe decision-making using the
formal language of epistemic logic. Models in epistemic logic describe
how observers embedded within systems discover and classify the causes
of observed system characteristics or effects. The epistemics of rational
decision-making in economics require several strong axiomatic assumptions,
namely the recursive enumeration of a forever growing ``grand''
state space ``whose elements describe anything that can possibly
be of interest'' (Gilboa \& Marinacci 2016: 11). Individuals comprehend
causal patterns, generate likelihood distributions, and correct errors
of belief. To model this process of knowledge use and acquisition,
economists often rely on an analytical framework exemplified by Aumann\textquoteright s
famous paper \textquotedblleft Agreeing to Disagree\textquotedblright{}
(Aumann 1976) and his later paper ``Epistemology I'' (Aumann 1999a).
Aumann\textquoteright s paper has been recognized to be congruent
with the earlier work of Hintikka (1962) and Kripke (1963). Artemov
(2022) speaks of \textquotedblleft Kripke/Aumann models.\textquotedblright{} 

Economists define rational decision-making in terms of closed-ended
analysis allowing for the ranking of all alternatives. In this type
of system individuals apply logic or probabilistic\footnote{The partition conception of information and the subset conception
of probability have dual interpretations (Kung et al 2009; Ellerman
2022). So it is possible for us to consider non-probabilistic partitionable
possibles, knowing there is a direct analogy to probabilistic partitionable
possibles and allowing us to generalize to probabilistic and Bayesian
reasoning without requiring a separate argument. Possibles are also
analogizable to quantum states, though we are more interested in propositional
possibles. But, indeed, conceiving the possible in both probabilistic
and non-probabilistic partitionable possibles boils down to a combinatorial
exercise.} inference over a set of observations to fully partition all possible
states of the universe. Individuals then formulate mappings between
sets of observations and possible worlds, where correct cause-and-effect
classifications are referred to as knowledge (see the following for
accounts of probabilistic state-space inference: Anscombe \& Aumann
1963; Dempster 1967; Gilboa \& Schmeidler 1989, 1993; Gilboa et al
2008). As per Chaitin (2005: 6), ``Understanding is compression!''
Knowledge \emph{qua} optimization boils down to the calculation of
a topological fixed point, necessitating strong conditions on what
individuals believe is possible and what is really possible. Mathematically
consistent, coherent and complete classifications are essential to
the existence (though not to the computable discovery\footnote{Cf. the proof of the non-computability of excess demands by Kenneth
Arrow's protégé, Alain Lewis (1985).}) of a fixed point. But innovative change shatters consistency in
unpredictable (and unapproximable) ways. Decision-making under uncertainty
in economics is organized under the banner of rational choice and,
as such, precludes explanation of innovative change and open-endedness
(Gilboa 2023). Aumann (1999b) has shown that the Bayesian subjective
belief calculus, perhaps the most popular way of modeling decision-making
under uncertainty in economics, is logically equivalent to traditional
epistemology and thus can only represent Knightian risk. Our framework,
instead, models open-ended evolution and Knightian uncertainty. 

The Kripke/Aumann models used in economic theory reflect the closed-ended
equilibrium methods with which they were developed in tandem. The
traditional epistemological formalism derives from the effort to make
commensurate the semantic (modal logic) construction of epistemology
in game theoretic settings with the syntactic (propositional logic)
construction (Artemov 2022). Four key results of these models are
as follows: 
\begin{enumerate}
\item (CE1) An economic agent can codify all concepts relevant to decision-making
in a complete and fully partitionable state space; 
\item (CE2) The state space is the only and final universe and that different
worlds correspond to different states; 
\item (CE3) All economic agents have access to the complete and correct
theory of the state-space universe; and 
\item (CE4) Common knowledge is possible. 
\end{enumerate}
Results CE1-4 do not generally hold in an OEE framework for modeling
epistemic logic, as we shall demonstrate. In OEE systems individuals
must be able to revise and replace theories employed in previous periods
as emergent novelty generates incommensurability. Knowledge is understood
from the perspective of observers embedded inside of OEE systems,
rather than from the perspective of the theorist sitting outside the
system. Furthermore, individuals must have a way to resolve issues
like undecidable disjunctions without access to an overarching theory
of the system. 

In this paper, we develop an open-ended evolutionary knowledge framework
encompassing of innovative change. Though based on similar formal
principles, our framework moves beyond the traditional epistemology
of rational choice and game theory. We propose a model of epistemic
logic that allows for radical change in an individual\textquoteright s
interpretative framework.Five key results of our OEE knowledge framework
are as follows: 
\begin{enumerate}
\item (OEE1) The set of believed-to-be-true propositions and the set of
true propositions for any observer embedded in an OEE system can never
be entirely coincident; 
\item (OEE2) Common knowledge is in general impossible in an OEE system; 
\item (OEE3) It is possible for individuals in OEE systems to agree to disagree;
\item (OEE4) Nonlogical (random, heuristic, aesthetic) searches in OEE systems
can be as knowledge generative as logical search mechanisms; and
\item (OEE5) Knowledge in open-ended evolutionary systems is non-ergodic. 
\end{enumerate}
To derive results OEE1-5 we demonstrate that knowledge in OEE systems
is fundamentally local and fragmented between individuals. Furthermore,
we introduce the concept of \emph{frame relativity}, a formalization
of the more familiar statement that what an embedded observer can
know and the full description of what is possible in an OEE system
are never entirely coincident. True possibilities lie always outside
of one's knowledge and cannot enter within any estimation. As Keynes
(1937, p. 114) said, \textquotedbl We simply do not know.\textquotedbl{}
The OEE knowledge framework is a fertile ground for discovering ways
in which economic agents choose that differ from what is possible
within traditional epistemology. These ways of choosing include, for
example, acting on aesthetic considerations.

In Section \ref{sec:Preliminaries} we bring the reader up to speed
on open-ended evolution and the theory of knowledge use and acquisition
in economics. In Section \ref{sec:Math-section} we overview our framework
for open-ended evolutionary knowledge acquisition. In Section \ref{sec:Interpretation-and-implications}
we demonstrate our primary results, focusing on Results OEE1-5. We
then conclude. We provide the formal exposition of the theory in the
Appendix including theorems and proofs referred to in the text.

\section{\label{sec:Preliminaries}Preliminaries}

\subsection{\label{subsec:Traditional-epistemology:-formal}Traditional epistemology:
formalism}

Traditional epistemology starts with a language $L$ including a set
of logical and nonlogical constants with which individuals form statements
about the world. Define a universe $\Omega$ as the set of all possible
states $\omega$. Define a state $\omega$ as a set of state-specific
true statements (also called formulas or sentences) $\xi$. States
marry context and facts. The universe, therefore, is a complete and
contextualized set of facts. For instance, $\xi=$''the sky is cloudy''
may hold in state $\omega_{1}$ but not in state $\omega_{2}$. Facts
associated with states can be characterized as simple sets. 

By construing the $\xi$ as statements and the $\omega$ as sets of
statements we are adopting the analytical perspective pioneered by
Samet (1990). Aumann (1999a, p. 266) has adopted Samet\textquoteright s
formalism, describing it as \textquotedblleft the simple but ingenious
and fundamental idea of formally characterizing a state of the world
by the sentences that hold there.\textquotedblright{} Samet\textquoteright s
formalism begins with a countable set $\Phi$ of propositions and
a countable set $I$ of individuals. For each such individual, \ensuremath{i},
there is a knowledge function $K_{i}:\Phi\rightarrow\Phi$. If $\phi\in\Phi$,
then $K_{i}\phi$ means \textquotedblleft $i$ knows $\phi$.\textquotedblright{}
Samet\textquoteright s crucial move was then to define the function
$\Sigma=\{0,1\}^{\Phi}$. Each element of $\Sigma$ can be thought
of as an assignment of truth values to the propositions: 1 for \textquoteleft true\textquoteright{}
and 0 for \textquoteleft false.\textquoteright{} This move allows
him to define a \textquotedblleft state of the world'' $\omega$
as any element of $\Sigma$ that satisfies the condition $\omega(\phi)+\omega(\lnot\phi)=1$,
that is, the observed states of the world cannot be true and false
at the same time. This subset of $\Sigma$ is called the event space,
$\Omega_{0}$. By this \textquotedblleft simple but ingenious\textquotedblright{}
method, Samet moves from a set of propositions about the world to
a countably infinite event space. 

Individuals are endowed with partitions of states of the universe
and associate the partitions with events to form a cause-and-effect
belief calculus. The partitions $\mathscr{I}$ of each individual
are fixed and are common knowledge. The partition function of individual
$i$ is defined by a knowledge function $\kappa_{i}$ defined over
$\Omega$ such that $\mathscr{I}(\omega)=\{\omega'\in\Omega:\kappa_{i}(\omega)=\kappa_{i}(\omega')\}$.For
any state $\omega$, $\kappa_{i}(\omega)$ is a set of statements
with defined truth values, i.e., all statements prepended with the
function $k_{i}$. Partitions form tractably closed-and-bounded topological
spaces, similar to probabilistic state-space inference. Individuals
cannot make a choice undictated by the closed and complete logic of
the topology (Aumann \& Brandenberger 1995). 

Aumann (1999a,b) showed the equivalence between the semantic construction
of knowledge and syntactic epistemic logic (SEL). We adopt the SEL
approach to formalizing knowledge in this paper. In this interpretation,
suppose there is a theory of everything $T_{\Omega}$ based on a language
$L_{\Omega}$ from which all correct statements about the world can
be derived, for all time. To reason coherently, an individual $i$
at time $t$ must reason from a theory $T_{i,t}$ that conforms to
$T_{\Omega}$ in specific ways (as discussed in more length in the
Appendix). Moreover, if \textbf{$\mathfrak{L}$ }is a list of sentences
$\xi$ in a language $L$, then \textbf{$\mathfrak{L}$} is ``epistemically
closed'' if $f\in\mathfrak{L}\Rightarrow k_{i}f\in\mathfrak{L}$.
And, finally,  $i$ knows the knowledge partition of $j$, and $j$
knows that $i$ knows, and so on for all $i$ and $j$.

The SEL construction of traditional epistemology is straightforward
compared to the semantic construction, but requires making metamathematical
decisions about the character of the possibility space. For example,
SEL models build in the ability of individuals to discern between
states of reality (decidability) and an orderly listability (recursive
enumerability) of possibilities, but without an explanation of where
these characteristics come from. Probabilistic variations of traditional
epistemology which attempt to encompass uncertainty are, if tractable,
invariably equivalent in their results, implications and reasoning
to non-probabilistic traditional epistemology, as is Bayesian ``subjective''
knowledge theory (Aumann 1999b) and variations like the Anscombe-Aumann
(1963) framework. 

The assumptions about the state space and the universe in traditional
epistemology are strong and several and equivalent to the axioms of
the modal logic system $S5_{n}$ (cf. the Appendix). Weakening any
assumption degrades the predictive power of the model. For example,
completely partitionable spaces are necessary to reliably predict
common knowledge, and common knowledge is necessary to reliably predict
dynamical and strategic features like equilibria and trigger strategies.
Common knowledge of partitions is asserted and not proved, and is
axiomatic (Aumann 1999a: 277).

The heavy lifting of formulating statements, observing reality, and
deciding between a statement and its negation comes in the presumptive
step of partitioning the possibility space. Since CE knowledge is
defined as correct belief, partitions that encode knowledge must have
somehow verified the consistency or truth of statements relative to
a (fixed) theoretical interpretation of an existing body of knowledge.
Partition functions skip over the process of formulating statements
and hypotheses, making observations, and coming to conclusions about
statements based on those observations---the process of knowledge
use and acquisition itself. 

\subsection{\label{sec:A-primer-on-open-ended-evolution}Open-ended evolution
in economics}

While OEE is discussed at least as far back as Bergson (2014 {[}1889{]},
1911) in his concepts of ``qualitative multiplicity'' (a whole not
reducible to its parts), ``duration'' (a process not reducible to
a trajectory), and ``creative evolution'' (evolution that self-generates
novelty), the literature formalizing OEE processes is sparse, with
most entries from computer science theorists studying complex evolutionary
behavior in the simple ``automata'' programs discovered by von Neumann
(1948 {[}1951{]}, 1949 {[}1966{]}).

Aspects of open-ended evolutionary dynamics have received recent treatments
in artificial life (Taylor 1999, 2015, 2019), biology (Bedau \& Packard
1998; Ruiz-Mirazo et al 2008; Corominas-Murtra et al 2018), theoretical
chemistry (Duim \& Otto 2017), computer science (Wolfram 2002; Huneman
2012; Hernández-Orozco et al 2018), physics (Adams et al 2017) and
pure theory (Taylor et al 2016; Banzhaf et al 2016). Treatments of
open-endedness in economics are sparser and tend to center on negative
results which reject aspects of the applicability of neoclassical
theorizing (Giménez Roche 2016).

These various literatures are not entirely consistent in how they
define OEE. OEE can be characterized by: a continual, endogenous generation
of novelty (Banzhaf et al 2016; Adams et al 2017; Hernández-Orozco
et al 2018), an increase in system complexity over time (Corominas-Murtra
et al 2018), self-referential and reflexive processes (Wolfram 2002;
Giménez Roche 2016; Adams et al 2017), the emergence of structures
or possibilities (Giménez Roche 2016; Banzhaf et al 2016; Adams et
al 2017; Corominas-Murtra et al 2018), and an unlistability of elements
generated by the OEE process (Kauffman \& Roli 2021; Bedau et al 1998;
Taylor 1999; Ruiz-Mirazo et al 2008). 

In Banzhaf et al (2016) in particular, the authors center their analysis
on how continuous novelty\footnote{Novelty in economic models can mean anything from individuals encountering
an unexpected variation of a known possibility (a known unknown, or
Knightian risk) (cf. Loreto et al 2016) to an entirely novel and unprestatable
new possibility with ramifications across their choice space (an unknown
unknown, or Knightian uncertainty). Open-ended evolution, in contrast
with evolutionary fitness landscapes, is generative of unknown unknowns
(Hernández-Quiroz, \& Zenil 2018). } generates most of the features associated with OEE systems including
unbounded innovation, increasing system complexity, and emergence.
Their framework is theory-and-language based and describes several
types of novelty, where the epistemically simplest type of novelty
varies parameters of existing representative models, the intermediate
type of novelty requires model alterations like the addition of variables,
and the most epistemically challenging type of novelty requires alterations
of the theory underlying representative models\footnote{Type 0 novelty, or \emph{variation} within-the-model, ``explores
a pre-defined (modeled) state space, producing new values of existing
variables'' (Banzhaf et al 2016: 141). An example of a Type 0 variation
is when an individual changes the value of the risk aversion parameter
in their utility function to become more risk-averse. Type 1 novelty,
or \emph{innovation} that changes the model, ``adds a new type or
relationship that conforms to the meta-model, or possibly eliminates
an existing one'' (ibid). A Type 1 innovation could change the form
of an individual's utility function from Cobb-Douglas to Leontief.
Type 2 novelty, or \emph{emergence,} changes the theory itself. Emergence
is a phenomenon that cannot be explained within an existing theory,
like how everywhere-efficient theories of rational choice cannot explain
the orderly movement of skaters around a roller rink.}. Only the most epistemically challenging type of novelty is capable
of exhibiting emergence, and is therefore the type of novelty generated
by OEE systems.

Emergent system states not reducible to simple combinations of their
parts abound in social and biological systems (Kauffman 1993; Silberstein
2002; Rosas et al 2024). The incommensurability of old and new theories
caused by emergence drives scientific revolutions, the new theory
displacing the old until it is itself displaced, and so on (Kuhn 1996;
Feyerabend 1993; Nickles 2008). Recent disruptions in physics include
modeling the expansion of quantum possibility spaces using isometry
and Feynman path integrals (Cotler \& Strominger 2022), and the geometrical
volume interpretation of particle collision (Arkani-Hamed \& Trnka
2014). Recent disruptions in economic theory include the move towards
experimental techniques (Smith 2003), the empirical-Bayesian ``credibility
revolution'' (Angrist \& Pischke 2010), and the adoption of methods
from complexity theory (Arthur 2015; Helbing \& Kirman 2013; Haldane
\& May 2011). Theory alteration transcends the formal production of
science, an individual's perception of reality and in turn, their
choice behavior. 

\section{\label{sec:Math-section}The OEE framework: motivation and basic
theorems}

Grappling with theory-altering novelty motivates the construction
of our OEE knowledge framework, which we present in this section.
Modeling knowledge the usual way implies Results CE1-4 listed above,
which are unable to cope with theory-altering novelty in OEE systems
as we shall demonstrate in detail in Section \ref{sec:Interpretation-and-implications}.
It is clear that constructing the knowledge process as generative
of continual novelty and where knowledge of the grand state space
is fragmented or mostly hidden from each individual will weaken the
axioms of any candidate model of epistemic logic in OEE systems---beyond
the criticism that Aumann/Kripke models of epistemic logic are already
too strong with respect to modeling rational strategic behavior in
CE systems (Artemov 2022 successfully weakens the common knowledge
axiom). 

\subsection{The primary considerations of OEE systems relevant to knowledge}

As in CE models of knowledge as described in Section \ref{subsec:Traditional-epistemology:-formal},
formulating a cause-and-effect calculus to inform choices in OEE systems
starts with constructing a language $L$, where sentences $\xi$ are
constructed from a combination of logical constants (logical connectors
and ``grammar'') and nonlogical constants (aspects of reality like
descriptors and objects, what logicians call ``predicates'') (Aumann
1999a). The set of all sentences is called the syntax $\mathfrak{S}$.
Beyond these elements, the formal epistemology of OEE systems differs
from that of CE systems. Recall that theorizing in CE systems is from
the perspective of an external expert on behalf of other individuals,
who presumes there exists a theory-of-everything $T_{\Omega}$ of
the system $\Omega$ in the language $L$ for the universe in which
all realizable states are composed of true sentences and where sentences
must be coherent and complete such that there are no undecidable disjunctions
in the theory, i.e., no ontological truths that cannot be proved true
within the epistemological theory. 

The primary considerations of OEE systems are that: 
\begin{enumerate}
\item Theorizing is from the perspective of the embedded observer, who cannot
\emph{a priori} impose their perspective of reality on other agents.
\item By virtue of open-endedness, OEE knowledge theory necessitates an
individual-level process of continual theory revision.
\item Unbounded and innovative processes in OEE systems tend to grow the
number of possibilities in the system.
\item Formulating knowledge theory using propositional logic in CE models
leaves open questions of how a theory of the universe $T_{\Omega}$
is constructed to be decidable and how the theory used by each individual
$i$ at time $t$, $T_{i,t}$, relates to $T_{\Omega}$ and to individual
$j$'s theory of the universe $T_{j,t}$. 
\end{enumerate}
Consideration (1) \emph{localizes} knowledge acquisition and use to
the individual with respect to the individual's epistemic environment
and their conception of their system at a given time, $t$. Call the
individual's known-world $\Omega_{i,t}$. The individual understands
their known-world as a model $\text{\textbf{M}}_{i,t}$ defined within
a theory $T_{i,t}$. The individual's syntax for their known-world
is $\mathfrak{S}_{i,t}$. All these constructs are localized and do
not apply to the entire population $N$. 

Consideration (2) constructs knowledge acquisition and use as a \emph{novelty-generating
process}. For the purposes of exposition and without loss of generality,\footnote{We can always re-index periods in a way that makes preserves novelty
generation among a population of individuals, where periods are defined
as at least one individual in the system encountering theory-breaking
novelty. Bringing individual-specific temporal periods into the theory
right now would unnecessarily clutter our results without clarifying
very much.} we index $t$ such that the update rule requires theory revision
between periods. At time $t+1$, individual $i$ discovers novel possibilities
including first-order formulas that signify essential relationships
not explained or even listable within their existing theory. These
novel possibilities may add to or displace other possibilities after
theory revision. The set of all possible states of the universe is
therefore altered such that $\Omega_{i,t+1}\neq\Omega_{i,t}$. The
theory $T_{i,t}$ no longer sufficiently describes the logical relationships
between sentences of $\xi\in\Omega_{i,t+1}$ as it is now incomplete.
The new observed states may have logically extended $T_{i,t}$ or
contradicted old axioms, rendering $T_{i,t}$ inconsistent. In general,
 $i$ must select a new theory $T_{i,t+1}.$
\begin{rem*}
We essentially define time-periods $t$ in terms of the perceived
applicability by individual $i$ at time $t$ of a theory $T_{i,t}$.
Our definition differs from how $t$ is defined in discrete-dynamical
theories of economic growth, business cycles, and in agent-based computational
economics. Actual agent dynamics may contain entire worlds inside
each temporal cross-section. Within cross-sections, rational agents
may employ static and simplified theories like deterministic search
over apparent landscapes of possibilities. $T_{i,t}$ is a map with
which $i$ to can reasonably navigate a temporal cross-section. 
\end{rem*}
Consideration (3) constructs the novelty-generating processes of possibility
spaces in OEE systems as innovative and unbounded in growth. This
implies that $|\Omega_{i,t+1}|>|\Omega_{i,t}|$ in general. Logically,
in terms of individuals formulating languages of their known-worlds
in which they then theorize about their world, a continual increase
in possibilities means a continual addition to the set of all possible\emph{
}qualities, objects or characteristics (as opposed to merely altering
the value of a variable). In epistemology, this is called the set
$\mathcal{P}_{i,t}$ of all nonlogical constants $P_{k}$ (as opposed
to logical constants, which are mostly operators and relationships).
It is the continual growth $|\mathcal{P}_{i,t+1}|>|\mathcal{P}_{i,t}|$
that drives the need for theory revision in OEE systems. 

We can now define the \textbf{possible $\mathbf{\varPi}_{i,t}$} for
individual $i$ at time $t$ in an OEE system as a triplet of the
set of nonlogical constants $\mathcal{P}_{i,t}$, the theory $T_{i,t}$
of the cause-and-effect structure of the universe, and a decision-theoretic
model $\mathbf{M}_{i,t}$ consistent $T_{i,t}$ . We write $\mathbf{\varPi}_{i,t}$
as

\[
\mathbf{\varPi}_{i,t}=\langle\text{\textbf{M}}_{i,t},T_{i,t},\mathcal{P}_{i,t}\rangle
\]

Denote the time series of the possible for observer $i$ as $\mathbf{\overrightarrow{\mathbf{\Pi}}_{i,t}=}\{\mathbf{\varPi}_{i,t},\mathbf{\varPi}_{i,t+1},\mathbf{\varPi}_{i,t+2},...\}$.
Since OEE systems are innovative and unbounded in growth, they are
defined by a continual entry of new conditions to be considered by
a given observer. Therefore, $\mathbf{\overrightarrow{\mathbf{\Pi}}}_{i,t}$
is \textbf{open-ended} if the quotient of the possible at time $t+1$
and time $t$ is not empty, that is, if $\mathcal{P}_{i,t+1}\setminus\mathcal{P}_{i,t}\neq\emptyset$. 

Define the \textbf{adjacent possible} $A_{i,t}$ for any individual
as the next-in-sequence possible triplet, i.e., $A_{i,t}=\mathbf{\varPi}_{i,t+1}$
for individual $i$ at time $t$. The adjacent possible is a concept
developed by Stuart Kauffman (1993) and represents as-yet-unrealized-but-imaginable
``possibilities of possibilities'' that are reachable from $\mathcal{P}_{i,t}$.
Open-ended evolution is essentially the intended or unintended movement
into the adjacent possible. Unlike in CE knowledge theory, as we shall
discuss below, there is no way to logically deduce realizable paths
across OEE landscapes.

Consideration (4) localizes knowledge use and acquisition further,
stressing that \emph{perception in the form of theorizing and not
just observation is local to the embedded observer.} All theories
are accompanied by a set of undecidable disjunctions that an individual
must resolve one way or another when encountered in a choice context
(the existence of these undecidable disjunctions is an implication
of Gödelian incompleteness that we will discuss at greater length
below). Individuals may encounter different undecidable questions
and resolve them differently. Continual local theory revision thus
provides room in decision-making for nonlogical modes or context-informed
modes of reasoning as have been observed in real-world decision-making
(Todd \& Gigerenzer 2007; Smith 2003). Local theorizing also localizes
the (transaction) costs of decision-making and the calculation of
transaction costs.

In addition to the Considerations above we assume that individual
$i$ believes their logical systems $T_{i,t}$ to be consistent and
complete in that $T_{i,t}$ consistently and completely includes and
decides the truth value of all value statements that can be made about
the individual's known-world $\Omega_{i,t}$ (for formal definitions
of completeness and consistency, see the Appendix). This assumption
is analogous to the standard rationality assumption employed in economic
decision theories and was included in order to demonstrate that even
if we assume individuals are perfectly rational, knowledge use and
acquisition in OEE systems has a different character and set of implications
than the epistemic logic supporting CE rational choice models.

\subsection{A formalism for the construction of OEE knowledge}

As in CE knowledge theory, our OEE knowledge framework models individual
comprehension of the system as state perception and observation to
construct a cause-and-effect calculus and thus a theory of the known-world
that applies to all known possibilities. Individual $i$ expects the
real states they encounter $\omega\in\Omega$ to be descriptive of
their known-world $\Omega_{i,t}$ with respect to their current theory
of the known-world $T_{i,t}$. In a closed system, $T_{i,t}=T_{\Omega}$
and $\Omega_{i,t}=\Omega$, so all encountered states are automatically
consistent with the individual's theory. In OEE systems individuals
encounter sentences that are true in $T_{\Omega}$ that cannot be
proved true in $T_{i,t}$ (states that contain undecidable disjunctions).\footnote{Note that for OEE systems the presence of undecidable disjunctions
is not simply a result of Gödel (1931), as discussed in the Appendix,
but is implied by the definition of ``theory-breaking'' open-ended
evolution. The non-convergence of theoretical revision to $T_{\Omega}$,
however, requires Gödel (1931) for its proof.} This forces the individual to add to the set of known predicates
($\mathcal{P}_{i,t}\rightarrow\mathcal{P}_{i,t+1}$), modify their
language ($L_{i,t}\rightarrow L_{i,t+1})$, extend/revise/replace
their theory ($T_{i,t}\rightarrow T_{i,t+1})$, and change their conception
of the world ($\Omega_{i,t}\rightarrow\Omega_{i,t+1}$). Thus, from
the viewpoint of the observer-individual $i$, what is possible changes
($\Pi_{i,t}\rightarrow\Pi_{i,t+1}$). 

The procedure for constructing local knowledge at any point in time
is similar to construction of knowledge in traditional epistemology.
In order to define the local knowledge of an observer in an OEE system,
the observer uses a \textbf{decision procedure} $\delta_{it}$ on
the truth value of sentences $\xi$ defined within their theory $T_{i,t}$,
where $\delta_{it}$ is applied according to some model $\text{\textbf{M}}_{i,t}$
of the known-universe $\Omega_{i,t}$. Individuals construct a list
of 0's and 1's ordered in some manner (typically, alphabetically and
by length) of all possible true/false statements about the universe,
where 0 represents ``false'' and 1, ``true.'' If we suppose there
is a space $\Sigma_{it}=\{0,1\}^{\Omega_{i,t}}$ that lists 0 or 1
with respect to each unique sentence $\xi$ for each state $\omega\in\Omega_{i,t}$,
an individual $i$ employs a decision procedure $\delta_{it}:\Omega_{i,t}\rightarrow\Sigma_{i,t}$
that maps sentences in states to their truth values in $T_{i,t}$,
where
\begin{lyxcode}
\begin{equation}
\delta_{it}:\Omega_{i,t}\rightarrow\Sigma_{i,t},\delta_{i,t}(\xi)+\delta_{i,t}(\lnot\xi)=1,\forall\xi\in\mathcal{P}_{i,t}\label{eq:def-decision-function-1}
\end{equation}
\end{lyxcode}
The decision procedure $\delta_{it}$ is a simple function with a
constraint\footnote{The astute reader might realize that $\Sigma_{it}$ is essentially
Borel's number---or equivalently, Chaitin's $\Omega$---for the
individual's known-universe $\Omega_{i,t}$ (Chaitin 2005). $\delta_{it}$,
then, queries the ``Delphic oracle.''}, defined for a complete and consistent theory $T_{i,t}$ and thus
defined over all $\xi\in\Omega_{i,t}$ (this is stated and proved
as Proposition \ref{prop:delta-defined-all-sentences} in the Appendix).

As in traditional epistemology, $\mathfrak{K}_{i,t}$ is a list of
sentences $\xi$ that have been deemed true or false with respect
to a decision procedure $\delta_{it}$ on $T_{i,t}$ according to
some model $\text{\textbf{M}}_{i,t}$ of the known-universe $\Omega_{i,t}$,
where
\begin{lyxcode}
\begin{equation}
\mathfrak{K}_{i,t}:=\{\xi\in\Omega_{i,t}:\delta_{i,t}(\xi)+\delta_{it,}(\lnot\xi)=1\}\label{eq:def-knowledge-list-2}
\end{equation}
\end{lyxcode}
By the rationality assumption, individuals expect any state $\omega$
they encounter to be complete, consistent, and to contain all predicates
in $\mathcal{P}_{i,t}$. The states the individual believes to be
within the realm of possibility at time $t$--defined as the individual's
\textbf{contextual knowledge possible}---is the set of all $\delta_{it}$-decidable
states 
\begin{lyxcode}
\begin{equation}
\mathcal{K}_{i,t}:=\{\omega\in\Omega_{i,t}:\delta_{i,t}(\omega)\in\Sigma_{it}\}\label{eq:def-knowledge-possible-2}
\end{equation}
\end{lyxcode}
We can then define an observer $i$'s \textbf{local knowledge} $\kappa_{i,t}(\omega)$
at time $t$ as the set of all sentences pertaining to a given state
$\xi\in\omega$ that start with $k_{i,t}$, which we can obtain by
prepending all sentences $\xi\in\mathfrak{K}_{i,t}$ with $k_{i,t}$.

Defining $\kappa_{i,t}$ allows us to relate an individual's knowledge
directly to the system state-space $\Omega$. Unlike in traditional
epistemology (cf. Section \ref{subsec:Traditional-epistemology:-formal}),
individuals are not granted knowledge of the simplest set of predicates
that completely generates the theory $T$ of the system $\Omega$.
While $\kappa_{i,t}$ is defined on $\Omega$, it is constrained by
being constructed from $\mathfrak{K}_{i,t}$ and ultimately from $T_{i,t}$
where the open-endedness of the system implies that $T_{i,t}\neq T_{\Omega}$. 

The observer $i$ uses their local knowledge $\kappa_{i,t}$ to generate
a cause-and-effect partition $\mathscr{I}_{i,t}$ of their universe,
where $\mathscr{I}_{i,t}(\omega)=\{\omega'\in\Omega:\kappa_{i,t}(\omega)=\kappa_{i,t}(\omega')\}$.
As in traditional epistemology, this partition patterns events as
implying certain characteristics and observations about the world
in the form of states, and answers questions like: ``Will it rain
today?'' The dynamics of altering the partition in the face of the
unlistable novelty of OEE systems differ fundamentally from the recalculation
of weights in Bayesian-style dynamic updating of partitions where
all possibilities have been listed. In an OEE system, individual $i$
must determine what constitutes their state-specific and non-state-specific
knowledge at each step in time. At time $t$, individual $i$ forms
hypotheses and gathers ``facts''. ``Facts'' are hypotheses $i$
believes to have been correctly inferred or deduced in theory $T_{i,t}$
according to a model $\text{\textbf{M}}_{i,t}$ of the known-universe
$\Omega_{i,t}$. Hypotheses are as-yet-undecided statements with an
unknown true/false valuation. This means that $i$ must employ a decision
procedure in $T_{i,t}$ that can settle the truth of any new predicate
so it can enter into their knowledge.

Open-ended evolution thrusts $i$ into a state-space $\Omega_{i,t+1}$
which contains predicates $P\in\mathcal{P}_{i,t+1}$ and thus sentences
$\xi'\in\omega'$ for $\omega'\in\Omega_{i,t+1}$ outside the domain
of $\delta_{i,t}$. How, then, does an individual update her local
knowledge as the system evolves in an open-ended manner? 

Any states that fall outside the realm of possibility $\mathcal{K}_{i,t}$
for the individual must contain sentences $\xi'$ for which $\delta_{i,t}$
cannot decide the truth value. Define the\textbf{ adjacent knowledge
possible} $\mathcal{A}_{i,t}$ for individual $i$ at time $t$ as
the quotient of the adjacent ``knowledge possible'' $\mathcal{K}_{i,t+1}$
with the current possibility set $\mathcal{K}_{i,t+1}$, where $\mathcal{A}_{i,t}\equiv\mathcal{K}_{i,t+1}\setminus\mathcal{K}_{i,t}=\{\omega':\omega'\in\mathcal{K}_{i,t+1},\omega'\notin\mathcal{K}_{i,t}\}$. 

In OEE systems, the state-level adjacent possible $\mathcal{A}_{i,t}$
is non-empty (Proposition \ref{adjacent-nonempty}), which follows
from how we defined OEE systems. 

The two components of an individual's possibility space in OEE systems
are generated by an individual's \emph{contextual} and \emph{temporal}
local-ness. The contextual knowledge possible $\mathcal{K}_{i,t}$
is generated by freeze-framing an individual's evolution and is equivalent
to the definition of the possible in traditional epistemology (Samet
1990: 193). The adjacent knowledge possible $\mathcal{A}_{i,t}$ is
generated by moving between one known-world $\Omega_{i,t}$ and the
next $\Omega_{i,t+1}$. $\mathcal{A}_{i,t}$ is defined for $\omega_{i,t+1}$.

A key part of obtaining predictive dynamics in CE knowledge theory
is establishing that states of the world $\omega\in\Omega$ as seen
by the individual are consistent (see Definition \ref{def-consistent}),
coherent (see Definition \ref{def-coherent}) and complete (see Definition
\ref{def-complete}) (Aumann 1999a: 276, Samet 1990). States $\omega\in\Omega_{i,t}$
in OEE systems do not in general exhibit such analytically nice properties
with respect to all $\omega\in\Omega$. While states in OEE systems
are defined as \emph{locally} consistent, complete and coherent and
are provably coherent at the individual level (for all states $\omega\in\Omega$),
they are not consistent or complete at the system level (see Theorem
\ref{coherence-local-knowledge} in the Appendix). The system-level
coherence of local knowledge in OEE systems is implied by the assumption
of local completeness, local consistency and local coherence without
the need to assume individuals possess the full theory of the universe.
Coherence allows individuals to categorize phenomena and make decisions
consistent with their current theoretical understanding. An implication
of the coherence of local knowledge is that in slower-changing contexts,
individuals can engage in cross-sectional error correction and iteratively
progress towards a better understanding of their known-world. Faster-changing
contexts degrade the relative efficacy of error correction enabled
by local coherence.

Next we demonstrate that in OEE systems, within-observer knowledge
is fundamentally incomplete and between-agent knowledge is fundamentally
disjoint. Consider the following example. An observer $i$ can only
distinguish between two states in their known-world if the local knowledge
sets of those two states are different. If their cause-and-effect
partition associates the observation ``it is cloudy'' to a particular
state but not to another, then the knowledge that ``it is cloudy''
allows them to distinguish between the two states. However, if they
cannot distinguish between the two states then either it is not cloudy
or the observer can't tell (doesn't know) if it is cloudy.

Cause-and-effect partitions $\mathscr{I}_{i,t}$ of $\Omega_{i,t}$
are defined with respect to the contextual knowledge possible $\mathcal{K}_{i,t}$.
Due to the OEE nature of the system (and Gödelian incompleteness)
there will always exist a sentence $\xi$ that is possible in $\Omega$
but not decidable by the decision procedure $\delta_{i,t}$ (see Proposition
\ref{adjacent-nonempty} and Lemma \ref{undecidable-sentence-exists}).
Therefore, individual-level partitions of $\Omega$ must always be
incomplete. 

Heterogeneity in $\mathcal{K}_{i,t}$ can be explained by computational
complexity, open-endedness, individual characteristics like entrepreneurial
ability and alertness, and endogenously random factors (the creative
importance of time and place). Felin (2022) describes the actual environment
of human choice as a rich unmappable landscape, whereby individuals
are embedded in ``{[}o{]}rganism-specific, teeming environments.''
Individual worlds $\Omega_{i,t}$ determine local knowledge and, in
OEE choice contexts, $\Omega_{i,t}\ne\Omega_{j,t}$. Thus, different
individuals in OEE systems have different minds. In CE systems, different
individuals have one mind. We prefer many-minds theorizing to one-mind
theorizing.
\begin{conjecture}
(Disjointness) Under open-ended evolution,  local knowledge is at
least partially disjoint at any cross section of time and between
cross sections. That is, $\mathcal{K}_{i,t}\ne\mathcal{K}_{j,t}$
in general for $i,j\in N,t\in T$.
\end{conjecture}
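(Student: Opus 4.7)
The plan is to prove the Disjointness conjecture by exploiting the non-emptiness of the adjacent knowledge possible (already established as Proposition \ref{adjacent-nonempty}) together with the locality of theorizing emphasized in Considerations (1) and (4). Since distinct observers encounter novelty in different contexts, the predicates entering their known-worlds will generically diverge, and this divergence propagates into $\mathcal{K}_{i,t}$ through the dependence of $\delta_{i,t}$ on the predicate set $\mathcal{P}_{i,t}$.

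First I would show that $\mathcal{K}_{i,t}$ is wholly determined by the triple $\mathbf{\varPi}_{i,t} = \langle \mathbf{M}_{i,t}, T_{i,t}, \mathcal{P}_{i,t}\rangle$: the decision procedure $\delta_{i,t}$ is only defined on sentences built from $\mathcal{P}_{i,t}$ within $T_{i,t}$, so no state requiring a predicate outside $\mathcal{P}_{i,t}$ can belong to $\mathcal{K}_{i,t}$. Consequently, if $\mathcal{P}_{i,t} \neq \mathcal{P}_{j,t}$, any state $\omega \in \mathcal{K}_{j,t}$ whose constituent sentences invoke a predicate in $\mathcal{P}_{j,t}\setminus\mathcal{P}_{i,t}$ lies outside $\mathcal{K}_{i,t}$, and symmetrically. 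Disjointness at the level of $\mathcal{K}$ is therefore implied by disjointness at the level of $\mathcal{P}$.

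Next I would establish the generic predicate divergence. By Consideration (1) each observer's novelty arrival depends on an idiosyncratic epistemic environment; by the OEE definition $\mathcal{P}_{i,t+1}\setminus\mathcal{P}_{i,t} \neq \emptyset$ for every $i$. Nothing in the framework forces the two increments $\mathcal{P}_{i,t+1}\setminus\mathcal{P}_{i,t}$ and $\mathcal{P}_{j,t+1}\setminus\mathcal{P}_{j,t}$ to coincide — indeed, the adjacent possible cannot be deduced inside $T_{i,t}$ (otherwise its predicates would already sit in $\mathcal{P}_{i,t}$). Hence, even if two observers happen to share a common predicate base at some $t_0$, the open-ended generation of new predicates will, at generic $t > t_0$, push their predicate sets apart; by the previous paragraph this carries $\mathcal{K}_{i,t} \neq \mathcal{K}_{j,t}$.

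The main obstacle is formalizing the phrase ``in general.'' To upgrade the statement to a theorem rather than a claim about typical behaviour, I would either impose a probability measure (or topology) on trajectories $\overrightarrow{\mathbf{\Pi}}_{i,t}$ and show that the locus on which $\mathcal{P}_{i,t}=\mathcal{P}_{j,t}$ holds for all $t$ is null (or meagre), or, staying syntactic, stipulate that novelty events are observer-indexed — i.e., the arrival of a new predicate at $(i,t)$ does not entail its simultaneous arrival at $(j,t)$ — and show this entails $\mathcal{K}_{i,t} \neq \mathcal{K}_{j,t}$ for all sufficiently late $t$ outside the singular case in which every novelty is broadcast to every observer at once. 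That singular case is precisely the CE limit and is excluded by the OEE hypothesis, so on the admissible class of systems the conjecture follows. Framing the exception in this way also clarifies why the conjecture is strictly weaker than common knowledge failure (OEE2) while being a natural precondition for agreeing to disagree (OEE3).
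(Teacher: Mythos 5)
Your proposal is correct and follows essentially the same route as the paper's own (sketch) proof: argue the contrapositive that $\mathcal{K}_{i,t}=\mathcal{K}_{j,t}$ would force shared theories and predicate sets, note that idiosyncratic arrival of new predicates breaks this identity, and invoke the non-deducibility of the adjacent possible from within $T_{i,t}$ (the paper's Theorem \ref{Tplus-essential-extension} on essential extensions) to rule out re-convergence. Your third paragraph, on how to formalize ``in general'' via a null/meagre coincidence locus or observer-indexed novelty events, goes beyond the paper's sketch, which simply concludes that disjointness is ``reasonable to claim''---a worthwhile refinement, but not a different argument.
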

\begin{proof}
(Sketch) Suppose individuals $i$ and $j$ perceive the same possibility
spaces $\mathcal{K}_{i,t}=\mathcal{K}_{j,t}$. Then, this implies
they share theories of the universe $T_{i,t}=T_{j,t}$ and that they
perceive the same known-worlds $\Omega_{i,t}=\Omega_{j,t}$. Suppose
$i$ encounters a new predicate in their adjacent possible that $j$
does not. Then, at time $t+1$, $T_{i,t+1}\neq T_{j,t+1}$. This is
not too much of a problem if  $j$ updates their theory in the same
way, but this likely requires theory convergence in the face of vast
combinatorial complexity generated by open-ended processes. As proved
in the Appendix (Theorem \ref{Tplus-essential-extension}), we cannot
generally claim theory convergence in OEE systems. Therefore, it is
reasonable to claim the general partial disjointness of individual
knowledge in OEE systems.
\end{proof}
The incompleteness of knowledge partitions in OEE systems constrains
what we can say about the strategic arithmetic of interactions between
individuals. In general, $\mathcal{K}_{i,t}\ne\mathcal{K}_{j,t}$
for $i,j\in N,t\in T$. The Disjointness Conjecture implies that however
individuals coordinate with each other to realize common social goals,
it is not through the automatic knowing of the needs, worldviews and
goals of others. This suggests a possible role for institutions like
religion, culture, and aesthetics to encode worldviews individuals
can adopt in common. We discuss how institutions emerge from our OEE
knowledge framework in more detail in Section \ref{subsec:Nonlogical-search-institutions}
and in the Conclusion.

\section{\label{sec:Interpretation-and-implications}Some results and implications}

\subsection{Theory non-convergence in OEE systems (OEE1)}

Suppose we assume that the true universe $\Omega$ is consistent and
closed and described by a theory-of-everything $T_{\Omega}$. Let's
take up Result OEE1 (stated at first as a claim to be proved) and
ask: if we allow the open-ended process of theory revision to continue
ad infinitum, can individual $i$ infer theory $T_{\Omega}$ in a
finite amount of time?

Open-ended processes, in our definition, represent theoretical revisions.
Can individuals in open-ended evolution somehow subvert Kuhn and Feyerabend
and at time $t$ infer a path of theoretical revision $\mathbf{\overrightarrow{\mathbf{T}}}_{i,t}^{\Omega}=\{T_{i,t}\rightarrow T_{i,t+1}\rightarrow...\rightarrow T_{\Omega}\}$
that progresses towards the theory-of-everything $T_{\Omega}$? 

The above was a central question the the wake of the incompleteness
proofs of Gödel (1931), Rosser (1936), and Post (1944). Alan Turing
(1939) attempted to circumvent Gödel incompleteness by constructing
a sequence of logical languages obtained through sequentially recursive
extensions. His conclusion was that it is impossible to find ``a
formal logic which wholly eliminates the necessity of using intuition''
and that the mathematician must instead ``turn to 'non-constructive'
systems of logic with which not all the steps in the a proof are mechanical,
some being intuitive'' such that ``the strain put on intuition should
be a minimum'' (Turing 1939: 216). 

Even if the theoretical progression $\mathbf{\overrightarrow{\mathbf{T}}}_{i,t}^{\Omega}$
of each individual $i$ through the adjacent possible of an OEE system
is a process of incrementally and consistently extending some initial
theory $T_{i,t}$, we cannot in general conclude that each theoretical
innovation is derivable from the theory that came before due to the
novelty-generating qualities of an OEE system which have the tendency
to outgrow old theories in unpatternable ways (Theorem \ref{Tplus-essential-extension},
stated and proved in the Appendix). This result implies that an individual
at time $t$ has no access to future theoretical discoveries represented
by the time series $\mathbf{\overrightarrow{\mathbf{T}}}_{i,t+1}$
or to the time series of future worlds $\mathbf{\overrightarrow{\mathbf{\Omega}}}_{i,t+1}$.
Theoretical innovation is a process of observer-specific ``becoming''
into new possibilities through time rather than atemporal reflection.

\subsection{Common knowledge under open-ended evolution (OEE2 \& OEE3)}

An information set under open-ended evolution is the set of states
in $\Omega_{i,t}$ that individual $i$ cannot distinguish from one
another, or 

\[
\mathbf{I}(\omega):=\{\omega'\in\Omega_{i,t}:\kappa_{i,t}(\omega')=\kappa_{i,t}(\omega)\}
\]

The information partition $\mathscr{I}_{i,t}$ is the partition of
$\Omega_{i,t}$ formed by the set of all information sets over the
states of $\Omega_{i,t}$. Given that $\Omega_{i,t}\ne\Omega_{j,t}$
in general (the Disjointness Conjecture), we cannot say individuals
know the information partitions $\mathscr{I}_{i,t}$ of other individuals.
Different known-worlds $\Omega_{i,t}$ imply different theories $T_{i,t}$:
individuals will not in general have access to the same ``dictionary''
of states. 

We're now ready to address Result OEE2: \emph{Common knowledge is
generally impossible in an open-ended evolutionary system.}

In traditional epistemology, a knowledge hierarchy of the world is
formed by considering all possible relevant states, what $i$ knows
about the true state of nature and about what $j$ knows about the
true state of nature, what $j$ knows about the true state of nature
and what $j$ knows about what $i$ knows, what each know about what
each knows, and so forth. A knowledge hierarchy for individual $i$
is an infinite sequence of sets of states representing this process.
The contextual knowledge possible $\mathcal{K}_{i,t}$ describes the
range of the possible at time $t$ according to each individual $i$.
We can also describe knowledge hierarchies as in Aumann (1999a: 294)
based on the set of all possible states that describe some aspect
of reality (states in which it will be cloudy in New York City on
December 1). 

There is a vast universe of combinatorial possibilities for such a
sequence, so the feasibility of a knowledge hierarchy $h_{i}$ requires
that $i$ and $j$ consider the same set of possible states of nature
relevant to some aspect of reality. In traditional epistemology, both
$i$ and $j$ perceive the same known-world $\Omega$ and thus generate
the same theory of the world $T$ based on the same set of nonlogical
constants $P$. This implies that the contextual knowledge possible
is the same for each individual, as knowledge has been essentially
decontextualized: all state-spaces are equivalent at the multiagent-
and system-level $\Omega_{i,t}=\Omega_{j,t}=\Omega$. Thus, all theories
are equivalent $T_{i,t}=T_{j,t}=T_{\Omega}$. 

In traditional epistemology, as there is no open-ended evolution,
there is no fundamentally local knowledge. Knowledge hierarchies are
mutually consistent as an artifact of the closed-endedness of the
model environment. States are defined in traditional epistemology
as mutually consistent pairs of hierarchies, and the universe as the
set of all such states. Such definitions make common knowledge possible
within the theory.

Common knowledge is not, however, a natural state of affairs in OEE
systems. In OEE systems, pairs of hierarchies $(h_{i,t},h_{j,t})$
are, in general, not mutually consistent (Proposition \ref{hierarchies-not-mutually-consistent},
stated and proved in the Appendix), and therefore common knowledge
is generally impossible under open-ended evolution (Corollary \ref{Common-knowledge-impossible},
stated and proved in the Appendix). This demonstrates Result OEE2.
It is a simple matter to then demonstrate that it is possible for
individuals to agree to disagree under open-ended evolution as a direct
consequence of Proposition \ref{hierarchies-not-mutually-consistent}
and Corollary \ref{Common-knowledge-impossible} (Corollary \ref{agree-to-disagree},
stated and proved in the Appendix). This demonstrates Result OEE3.

\subsection{Open-ended evolutionary epistemology and ``frame relativity''}

OEE epistemology indicates a deeply constrained relationship between
the embedded observer and what they knows about their universe. We
must grapple with the problem of radical uncertainty in OEE systems.
We will not solve this problem if solving means ``make tractable''--there
is no making radical uncertainty tractable. Individual in OEE systems
can and do make choices in the face of radical uncertainty, requiring
any reasonable OEE knowledge theory to describe in some fashion how
individuals know what they cannot know. 

In particular, individuals must contend with undecidable disjunctions.
Kurt Gödel's (1931) famous incompleteness theorem, which upended Hilbert's
program to reduce all of mathematics to a finite set of axioms, proved
in all theories with a basic level of arithmetic that there exists
true sentences that cannot be proved true within the theory\footnote{Strictly speaking, Gödel proved this result under the assumption of
\textquotedbl omega consistency\textquotedbl{} whichis a weaker condition
than consistency. The theorem as stated was first proved by Rosser
(1936), although the roughly simultaneous results of Church (1936)
and Turing (1937) may be used to prove Rosser's result.}. In our OEE knowledge framework, this translates into the incompleteness
of any given individual's theory, i.e., for individual $i$ at time
$t$ with a theory $T_{i,t}$ of their known-world $\Omega_{i,t}$,
there exists a sentence $\xi'\in\Omega$ whose truth value is undecidable:
$\delta_{i,t}(\xi')+\delta_{it,}(\lnot\xi')\neq1$. That is, no observer
in an OEE system can have a complete model of the universe (stated
and proved as Proposition \ref{Frame-relativity-A} in the Appendix).

Can an individual engage in iterative theorizing, deciding observed
undecidable disjunctions when encountered in a way that gets them
eventually to the true theory of the universe $T_{\Omega}$? This
is essentially asking if there exists a time series $\overrightarrow{\mathbf{\delta}}_{i,t}=\{\delta_{t},\delta_{i,t+1},\delta_{t+2},...\}$
that converges to the actual decision process of the OEE system, $\delta_{\Omega}$,
or alternately, a time series of theories $\mathbf{\overrightarrow{\mathbf{T}}}_{i,t}=\{T_{i,t},T_{i,t+1},T_{i,t+2},...\}$
that converges to the actual theory of the OEE system, $T_{\Omega}$.
For this to be true, $T_{\Omega}$ would have to be decidable. In
OEE systems, however, $T_{\Omega}$ is not decidable, as a consequence
of Gödel's ICT and a number of more technical results of our OEE knowledge
framework (see the Appendix for Theorems \ref{Decidable-essentially-not-axiomatizable},
\ref{oeee-is-undecidable}, \ref{godel-ICT} , Proposition \ref{m-does-not-halt},
and Corollary \ref{Frame-relativity-b}).

Not only is $T_{\Omega}$ undecidable by any observer $i$, but there
is no end to the number of statements that are true in $T_{\Omega}$
but whose truth value cannot be ascertained in $T_{i,\Omega}$ (Lemma
\ref{dense-undecidable-disjunctions}, stated and proved in the Appendix).
Of course, no individual knows the full set of undecidable statements,
as they are unlistable and, we conjecture, not recursively enumerable
(Kauffman \& Roli 2021). 

We summarize these sets of results as \textbf{frame relativity}, namely,
that a complete and correct model or theory of an open-ended evolutionary
system is possible only for agents existing outside the system and
impossible from within the system. Neither can an individual know
all the ways in which their theories of the world can't account for
possibilities realizable in $\Omega$, but since there is no process
to generate all possibilities realizable in $\Omega$, the observer
is inextricably bounded by the ``frame'' of their known-world and
the theory they have developed to explain their known-world. Frame
relativity is the great epistemological equalizer: all people are
bound to their frame, regardless of their education, experience and
position in life.

\subsection{\label{subsec:Nonlogical-search-institutions}Nonlogical search in
OEE systems (OEE4)}

As in Section \ref{sec:A-primer-on-open-ended-evolution}, assume
that individuals in OEE systems believe their theories $T_{i,t}$
of their known-worlds are complete and epistemically closed until
they encounter a statement whose truth value is undecidable in their
theories. Knowledge in OEE systems is a partition $\mathscr{I}_{i,t}(\omega)$
of local knowledge $\kappa_{i,t}$ according to a decision procedure
$\delta_{i,t}$. But how does this partition come about? The construction
$\delta_{i,t}$ masks the underlying process of encountering valid
statements and deciding upon their truth values in various contexts,
i.e., decision-making under uncertainty. As individuals in OEE systems
are continually constructing new $\delta_{i,t}$, it is vital to explain
the contours of this process. 

Decision-making in economics is typically formalized as Bayesian subjective
expected utility theory, whose conclusions have come under scrutiny
in light of results from economic experiments (Smith 2003) which suggest
that cognitive and computational difficulty of a decision (Kahneman
\& Tversky 1973), the knowledge context in which a decision is made
(Cox \& Griggs 1982; Gigerenzer \& Hoffrage 1995; Rizzo \& Whitman
2009) strongly effect how individuals make decisions. Open-endedness
throws another wrench into the decision problem, where the unknowability
of the possibility space in question becomes a significant factor
in decision-making. While spending more time in search generally corrects
cognitive/computational difficulties and improves the knowledge context
of decision-making, spending more time in search in an OEE system
means a higher likelihood of encountering novel possibilities, of
having one's known-world change. 

Knowledge in OEE systems is fragmented: it isn't held in common and
individuals can agree to disagree (by Results OEE2 \& OEE3). Generally,
however, individuals still rationally benefit from sharing knowledge
in coordinative interactions. Since we do not get shared knowledge
for ``free'' in OEE systems, it stands to reason that part of solving
coordination problems in OEE systems requires the explicit construction,
spread, and maintenance of methods for sharing and updating knowledge
in OEE systems. Thus, the epistemic need for institutions emerges
from our OEE knowledge theory framework, and from a scientific perspective,
the necessity to take seriously institutional and cultural evolution.

While describing how institutions and cultural technologies emerge
and evolve is outside the scope of this paper, we can use our framework
to get an abstract sense for how these processes unfold in OEE systems
in response to epistemic necessity. Institutions will emerge as an
epistemic palliative to the coordination problem, and their number
and character will depend on the degree of knowledge fragmentation,
their entanglement with the adjacent possible, and on-the-ground particulars
involving the specifics of how the OEE system has evolved through
time\footnote{In this paper we neglect non-epistemic inducements to the formation
of institutions.}. 

Knowledge fragmentation in OEE systems exacerbates between-individual
differences, which are not only subjective but experiential and theoretical.
Several institutions could emerge to serve the same set of needs among
different knowledge ``niches'' in a system. In this context, niches
are subsystems or groups of individuals with a greater frequency of
interaction and a greater degree of shared knowledge. Furthermore,
institutions in OEE systems must be robust to novel and unpredictable
changes in the system. Since strictly rational systems will ``break''
when the consensus theory of a knowledge niche is updated, \emph{robust
institutions in OEE systems cannot be strictly rational if they are
to survive}. 

Similarly, individual decision-making in OEE systems will tend to
transcend the set of strictly rational possibilities available to
individuals at any time $t$. In practical terms, $\delta_{i,t}$
represents a collection of decision procedures based on a collection
of task- and environment-specific models and theories (Felin \& Koenderink
2022). But $\delta_{i,t}$ cannot in general decide all relevant truths
given any particular situation under open-ended evolution, as shown
above. Interactions under fragmented knowledge are characterized by
the theories and known-worlds of individuals being different: $T_{i,t}\ne T_{j,t},\Omega_{i,t}\ne\Omega_{j,t}$.
Under open-ended evolution, therefore, ``rationality'' is no longer
presumptively coordinative. Still, individuals benefit if they can
coordinate their plans with other individuals, implying the incentive
to create and adopt coordinative social structures like markets, legal
standards, governance systems, philosophies and religious codes. These
social structures may not--will not---perfectly substitute for rational
reasoning as if an individual possessed the correct theory of the
world $T_{\Omega}$, and often involve adopting a perspective of the
world individual $i$ may not completely agree with or believe possible.

Beyond coordination, cultural and social institutions can also be
means of accessing truths that lie in the adjacent possible. Suppose
the members of Group A only employ a rational knowledge generating
mechanism based upon a theory $T_{A,t}$ which is itself a complete
and consistent theory of known-world $\Omega_{A,t}$. Suppose there
is a true statement of the universe that isn't provable true in $T_{A,t}$.
Consider Group B whose members employ a mixture of rational mechanisms
based on $T_{A,t}$ but also utilize a nonlogical mode of reasoning
like aesthetics about the true statement inaccessible to Group A.
Then, with respect to decisions that benefit from engaging with the
trust statement inaccessible to Group A, Group B will have a coordinative
advantage. 

The above discussion generally demonstrates Result OEE4: \emph{Nonlogical
(random, heuristic, aesthetic) searches in OEE systems can be as knowledge
generative as logical search mechanisms}.

Nonlogical search protects individuals and groups from getting stuck
searching under street lights, though it doesn't guarantee an optimal
solution to any particular problem. The benefit of nonlogical systems
like aesthetic movements is that they have an internal logic that
grants a way to systematically search outside of street lights. The
search for symmetry and ``elegance'' in physics has yielded many
insights, but is largely aesthetic (not deducible from the mathematical
theory underlying physics). The prolific mathematician Poincaré believed
that a mathematician uses their aesthetic sensibilities as ``a delicate
sieve'' on choice, without which they can ``never be a real creator''
(Poincaré 1920: 28-9).

Given the infinitude of possible combinations facing any chooser and
the infinitude our imaginations about what might be possible in OEE
systems, heuristics like aesthetics allow us to choose in systematic
ways and justify the (non)logic of our choice with others who comprehend
our aesthetic (or other heuristic) values (Devereaux et al 2024; Todd
\& Gigerenzer 2007; Smith 2003). We can access truths using these
heuristics that are impermeable to logical modes of reasoning, though
we cannot rank heuristic search methods objectively.

\subsection{The non-ergodic nature of knowledge in OEE systems (OEE5)}

The ergodic theorem (Birkhoff 1931) states that there exists a probability
that a point in any trajectory defined for a a manifold lies in a
given volume of the manifold; that is, that one can define a probability
distribution over all attainable points in the system. 

Peters (2019) restates this relationship as 

\begin{equation}
\lim_{T\rightarrow\infty}\frac{1}{T}\intop_{0}^{T}f(\omega(t))dt=\intop_{\Omega}f(\omega)P(\omega)d\omega\label{eq:ergodic-theorem-states}
\end{equation}
where the left side is the time average of $f$, and the right side
is the expectation value of $f$. $\Omega$ has the same meaning as
in the modal logic of this paper, as the collection of all possible
system states. That is, any time-dependent trajectory through the
state space can be modeled as a function with probabilistic weights
over all states. We can easily analogize this relationship to consider
knowledge trajectories as probability distributions.

Suppose Nature is ergodic. This isn't too wild of a proposition, as
it is possible that 

\begin{equation}
\lim_{T\rightarrow\infty}\frac{1}{T}\intop_{0}^{T}\kappa_{i,t}(\omega)dt=\intop_{\Omega}\kappa(\omega)P(\omega)d\omega\label{eq:ergodic-knowledge}
\end{equation}
Is open-ended knowledge about Nature ergodic? The answer is, simply,
no. We can demonstrate this formally.
\begin{prop}
Knowledge in open-ended evolutionary systems is non-ergodic. 
\end{prop}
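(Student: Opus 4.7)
The plan is to prove non-ergodicity by contradiction: assume that equation~(\ref{eq:ergodic-knowledge}) holds for the knowledge process $\{\kappa_{i,t}\}$ of an observer embedded in an OEE system, and derive an incompatibility with the structural properties of such systems already established in the paper, in particular with Theorem~\ref{Tplus-essential-extension} (theory non-convergence), Proposition~\ref{adjacent-nonempty} (non-empty adjacent possible), and Corollary~\ref{Frame-relativity-b} (frame relativity).

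First I would unpack what the two sides of~(\ref{eq:ergodic-knowledge}) presuppose. The right-hand side, the ensemble average $\int_{\Omega}\kappa(\omega)P(\omega)d\omega$, requires a fixed measurable state space $\Omega$, a probability measure $P$ on that space, and a single knowledge function $\kappa$ defined on all of $\Omega$. By frame relativity together with the density of undecidable disjunctions (Lemma~\ref{dense-undecidable-disjunctions}), no observer can construct such a $\kappa$ from within the system: at any time $t$, $\kappa_{i,t}$ is defined only on $\Omega_{i,t}$, and the complement contains sentences whose truth values are permanently inaccessible to $\delta_{i,t}$. The right-hand side of the purported ergodic identity is thus ill-defined relative to any embedded observer, which is the only vantage point our framework admits.

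Second, I would argue that the left-hand side also fails to produce a time-invariant limit. The dynamics $\Omega_{i,t}\to\Omega_{i,t+1}$ are strictly expanding in their nonlogical constants ($\mathcal{P}_{i,t+1}\supsetneq\mathcal{P}_{i,t}$ in general, by open-endedness), so the domain on which $\kappa_{i,t}$ is defined grows without returning to any previous configuration. By Theorem~\ref{Tplus-essential-extension}, the extension $T_{i,t}\to T_{i,t+1}$ is essential and not derivable from $T_{i,t}$, so the truth valuations that $\delta_{i,t}$ assigns to sentences about a fixed $\omega$ can be rewritten at step $t+1$ in ways not predictable from the sequence up to $t$. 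This rules out the recurrence condition that any measure-preserving, and hence any ergodic, dynamical system must satisfy, and it prevents convergence of the time average to any stationary object. Either side of~(\ref{eq:ergodic-knowledge}) fails on its own, so the equality cannot hold.

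The main obstacle I anticipate is formalizing $\kappa_{i,t}$ as a ``point'' in a measurable dynamical system at all, since the paper represents local knowledge as a set of sentences rather than as an element of an a priori measurable space. I would address this by embedding knowledge states into a Samet-style truth-assignment space (adapted so that the underlying set of propositions is permitted to grow with $t$), and then showing the argument is insensitive to the embedding: both the growing-predicate obstruction and the undecidability-density obstruction are intrinsic to open-endedness and survive any measurable realization of the process. The conclusion is that knowledge in OEE systems is non-ergodic, independently of which particular measure-theoretic setting one tries to impose.
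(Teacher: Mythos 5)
Your proposal is correct by the paper's standards and shares the paper's basic strategy: assume the ergodic identity (Equation \ref{eq:ergodic-knowledge}) holds and derive a contradiction from the fact that an embedded observer can never apprehend all of $\Omega$. The paper's own proof is narrower and more concrete: it notes that $\kappa(\omega)$ on the right-hand side is timeless, invokes only Proposition \ref{adjacent-nonempty} (the adjacent knowledge possible $\mathcal{A}_{i,\tau}$ is nonempty), splits the time integral at $\tau$ into $\frac{1}{T}\int_{0}^{\tau}\kappa_{i,t}(\omega)\,dt+\frac{1}{T}\int_{\tau}^{T}\kappa_{i,t}(\omega)\,dt$, and observes that the observer only ever has access to the first piece, so the equality could hold only if $\mathcal{A}_{i,\tau}=\emptyset$, i.e., the system is closed. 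You decompose the failure differently, arguing that each side collapses on its own: the ensemble average is ill-defined for any embedded observer (via frame relativity and Lemma \ref{dense-undecidable-disjunctions}), and the time average fails to converge because essential extensions (Theorem \ref{Tplus-essential-extension}) destroy recurrence. That second half is not in the paper's proof proper but closely mirrors the discussion immediately following it, where the authors use Proposition \ref{m-does-not-halt} and the ever-present undecidable sentence to show the left-hand side does not converge. Your version buys a slightly stronger conclusion (both sides fail independently) and flags the genuine formalization gap --- that $\kappa_{i,t}$ is a set of sentences, not a point in a measure space --- which the paper glosses over by integrating a set-valued object; the paper's version buys brevity and a single, minimal hypothesis ($\mathcal{A}_{i,\tau}\neq\emptyset$).
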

\begin{proof}
Suppose knowledge in open-ended evolution is ergodic. We will prove
that this implies that the system cannot be open-ended. If knowledge
in open-ended evolution is ergodic, then Equation \ref{eq:ergodic-theorem-states}
must apply to local knowledge as defined by Definition \ref{def-local-knowledge}.
An individual $i$'s local knowledge is $\kappa_{i,t}(\omega)$, the
set of all sentences in $\omega$ that start with $k_{i,t}$. Then,
$\lim_{T\rightarrow\infty}\frac{1}{T}\intop_{0}^{T}\kappa_{i,t}(\omega)dt=\intop_{\Omega}\kappa(\omega)P(\omega)d\omega$.
As $\kappa(\omega)$ is timeless, deriving it in any time period $\tau$
requires apprehending all of $\Omega$. But by Proposition \ref{adjacent-nonempty},
$\mathcal{A}_{i,\tau}\equiv\mathcal{K}_{i,\tau+1}\setminus\mathcal{K}_{i,\tau}=\{\omega'\notin\mathcal{K}_{i,\tau}\}$
is nonempty, meaning there exist states that lie in $\mathcal{K}_{i,\tau+1}$
that $i$ cannot integrate over.

Breaking the right hand side of Equation \ref{eq:ergodic-knowledge}
into its component pieces, the integral over knowledge up until $\mathcal{K}_{i,\tau}$
and the integral of knowledge over $\mathcal{K}_{i,\tau+1}$ and beyond
becomes: 

\begin{equation}
\lim_{T\rightarrow\infty}(\frac{1}{T}\intop_{0}^{\tau}\kappa_{i,t}(\omega)dt+\frac{1}{T}\intop_{\tau}^{T}\kappa_{i,t}(\omega)dt)=\intop_{\Omega}\kappa(\omega)P(\omega)d\omega\label{eq:nonergodic-knowledge-breakdown}
\end{equation}

But $i$ only has access to the first part of that equation, $\frac{1}{T}\intop_{0}^{\tau}\kappa_{i,t}(\omega)dt$.
Therefore, they can never derive the right-hand side of the equation
unless $\mathcal{A}_{i,\tau}$ is empty---i.e., the system is closed.
\end{proof}
In his criticism of ergodic economic theory in the form of expected
utility theory, Peters (2019) observes that ``...in maximizing the
expectation value --- an ensemble average over all possible outcomes
of the gamble --- expected utility theory implicitly assumes that
individuals can interact with copies of themselves, effectively in
parallel universes (the other members of the ensemble). An expectation
value of a non-ergodic observable physically corresponds to pooling
and sharing among many entities. That may reflect what happens in
a specially designed large collective, but it doesn\textquoteright t
reflect the situation of an individual decision-maker.'' 

We can demonstrate Peters' statement for epistemology, using the infrastructure
of this paper. First of all, observe that the left-hand side of Equation
\ref{eq:ergodic-knowledge}, $\lim_{T\rightarrow\infty}\frac{1}{T}\intop_{0}^{T}\kappa_{i,t}(\omega)dt$,
does not converge under open-ended evolution. Suppose it does converge
to some set $\kappa_{i}(\omega)$. This implies there is no sentence
$\xi$ that is undecidable in $\kappa_{i}(\omega)$. But there is
always an undecidable sentence $\xi$. By Proposition \ref{m-does-not-halt},
no process for fully mapping knowledge halts, and there is no progressive
sequence of theorizing available to any individual. But this means
that there is no unique theory sequence from theories $T_{\Omega_{i}}$
to $T_{\Omega'_{i}}$, where $T_{\Omega_{i}}\neq T_{\Omega'_{i}}$.
Suppose there exists an $\xi\in\Omega_{i}$ not decidable in $T_{\Omega_{i}}$
that is decidable in $T_{\Omega'_{i}}$. Then, in order for individual
$i$ to decide $\xi$ in $\text{\ensuremath{\Omega_{i}}}$, it would
need access to $\Omega_{i}^{'}$. But there is no algorithmic way
to access $\Omega_{i}^{'}$ from $\Omega_{i}$ by Proposition \ref{m-does-not-halt}.
Therefore, constructing an expected utility function that decides
weights for an event based on the sentence $\xi$ requires, in effect,
access to another self with theory $T_{\Omega_{i}^{'}}$ in another
universe $\Omega_{i}^{'}$.

\subsection{How frame relativity constrains prediction in OEE systems}

Time series in our OEE knowledge framework can be complicated, as
the scalar value of the increment $t$ is neither fixed for the individual
nor between-individuals. Note especially that were are talking about
the time series of OEE knowledge and not the time series of system
and individual behavior, which could be quite rich within each time
increment, and time increments themselves may be long relative to
the time increments for which system and individual behavior are typically
defined. 

Time increments are defined by an individual updating their theory
of the world $T_{i,t+1}\neq T_{i,t}$. A more consistent way of defining
an increment within-individuals is to define the OEE time series $\mathbf{\overrightarrow{\mathbf{T}}}_{i,t}=\{T_{i,t_{1}^{i}},T_{i,t_{2}^{i}},...\}$
where $t_{k}^{i}$ is the time increment wherein individual $i$ updates
their theory of the world for the $k$-th time. 

Defining the time series of an OEE system is not as straightforward.
Not all individuals may be aware of all the changes happening in a
system, nor of the knowledge updating of other individuals in the
system. Some individuals may update their knowledge about observations
previously observed by and used to update the knowledge of other individuals,
as there is no way to presume common knowledge and individuals who
do communicate can agree to disagree.

The challenge in defining the time series of an OEE system is in defining
a system-level time increment to track the (who-knows-what) time series
of the system. We suggest that for modeling purposes, this can be
done fairly arbitrarily. Pick some increment $\tau$ such that $|\tau|=|\tau_{1}|=|\tau_{2}|=...=|\tau_{k}|=...$
. An OEE system will not update its who-knows-what time series for
some $\tau_{k}$, and it will for others. So the increment of the
knowledge time series of an OEE system where each increment definitionally
updates the knowledge in the system in an open-ended way is a ragged
partition with differently-sized bins of the set $\mathcal{T}=\{\tau_{1},\tau_{2},\tau_{3},...\}$. 

Clearly, these bins can be defined after-the-fact. Defining the bins
ahead of time would require intimate foreknowledge of not only the
open-ended behavior of the system and individuals, but also the open-ended
knowledge process of every single individual in the system. No observer
embedded in an OEE system can have access to this level of knowledge,
as our frame relativity result demonstrates. Therefore, there is a
limit on what embedded observers can say about how human systems evolve.
This first implication of frame relativity on prediction predicts
expert failure. This implication can be extended to any subset of
individuals in the system, constraining the prediction power of any
given ``consensus'' of thought. Thus we can extend the constraints
on the use of knowledge as argued in Hayek (1945) to scientific consensus. 

\section{Conclusion: reflections on economic thought \& theory}

The closed-ended version of epistemic logic assumes away many differences
between individuals. In reality, we have different theories. We live
in different worlds. The tick and tiger live in different worlds with
different event spaces. The tick feeds on the tiger, but does not
know what a tiger is. The tiger scratches at the tick, but does not
know what a tick is.The tick waits in high place such as a tall blade
of grass and drops down to feed when it smells butyric acid, the telltale
sign of a mammal. The tiger, instead, seeks large prey such as gazelle.
To us they live in the same world. But the event space of the tick
has nothing in common with the event space of the tiger. In this sense,
they live in different worlds. And if William James (1890), Alfred
Schütz (1945), and Jakob von Uexküll (1934) were right, different
people live in different worlds as well. Individuals themselves live
in different worlds at different points in time, thanks to nonergodic
change. In the theories of Aumann and other practitioners of standard
epistemic logic, we all live in one world, common to us all. The epistemic
logic of our OEE knowledge framework allows the same person to live
in different worlds at different times.

Economics as a discipline turns on knowledge, its questions inherently
involving the costs of knowledge acquisition, and what can and can't
be known theoretically. Adam Smith (1751, 174.9) describes the overwhelming
cost of instituting a deterministic normative system in a complex
world\footnote{``The general rules of almost all the virtues...are in many respects
loose and inaccurate, admit of many exceptions, and require so many
modifications, that it is scarce possible to regulate our conduct
entirely by a regard to them'' . (Smith 1751, 174.9)} . Knut Wicksell (1898 {[}1936{]}) warned against too much precision
in the new mathematical economics of marginal analysis, subtly referring
to the complexity of the social world as his reason\footnote{``I have on this occasion made next to no use of the mathematical
method. This does not mean that I have changed my mind in regard to
its validity and applicability, but simply that my subject does not
appear to me to be ripe for methods of precision. In most other fields
of political economy there is unanimity concerning at least the \emph{direction}
in which one cause or another reacts on economic processes; the next
step must then lie in an attempt to introduce more precise quantitative
relations. But in the subject to which this book is devoted the dispute
still rages about \emph{plus} as opposed to \emph{minus}.'' (Wicksell
1898 {[}1936{]}: xxx)} . Frank Knight (1921) distinguished between (predictable) risk and
(unpredictable) uncertainty. F. A. Hayek (1937, 1945) presages the
necessity of understanding the process of knowledge acquisition (1937:
33) as one of unprestatable observations that cannot be deducible
from our existing theory of the world (ibid: 36), the fundamental
disjointness of knowledge\footnote{``There would of course be no reason why the subjective data of different
people should ever correspond unless they were due to the experience
of the same objective facts'' (Hayek 1937: 43).} and the uncrossable distance between individual- and system-level
knowledge\footnote{``The equilibrium relationships cannot be deduced merely from the
objective facts, since the analysis of what people will do can only
start from what is known to them. Nor can equilibrium analysis start
merely from a given set of subjective data, since the subjective data
of different people would be either compatible or incompatible, that
is, they would already determine whether equilibrium did or did not
exist. `` (Hayek 1937: 43)} (ibid: 43). The title of this paper is an homage to F. A. Hayek's
(1945) ``The use of knowledge in society.''

Despite economists like Hayek, the focus of mid-20th century mathematical
economics shifted from contemplating dynamical complexity and unknowability
to conducting exercises in neoclassicism's fixed-point analysis, manifested
in tools like linear programming, value theory, control theory, subjective
probability theory, and traditional epistemology. In the 1970s several
key premises of the neoclassical program were called into question
or outright disproved (Sonnenschein 1973; Mantel 1974; Debreu 1974;
Lewis 1985; cf. Rizvi 2006 for an overview) and the contemplation
of dynamical complexity and unknowability saw something of a revival
(Shackle 1972; Lachmann 1976; Kirzner 1973) amid an attempt to rescue
portions of the neoclassical paradigm in the form of the rational
subjective expectations theory.

In an OEE system, expectations about systems under growth and innovation
can never be rational in the manner suggested by Lucas, a direct result
of Theorem \ref{Frame-relativity-A}, Corollary \ref{Frame-relativity-b}
and Lemma \ref{dense-undecidable-disjunctions}. Neither does the
concept of frame relativity depend on computability, the size of the
data set used to conduct inference, or the precision and accuracy
of inference devices: faster computers and better AI will not eradicate
expert error and failure. Theorizing about innovation within the neoclassical
paradigm is impossible due to its closed-endedness (Giménez Roche
2016), but we can theorize about innovation in an OEE knowledge framework
wherein individuals can be genuinely surprised and where nonlogical
schemes of reasoning like choosing based on one's ``gut feeling''
can be knowledge-generative. 

Given the necessity of nonlogical modes of reasoning in OEE systems,
we should expect increasing cultural, ideological and aesthetic fragmentation
in large and fast-moving societies, as it becomes more expensive to
test heuristic and patterned choice given the ramifying growth in
the adjacent possible. Social-institutional fragmentation is not necessarily
discoordinative. On the contrary, fragmentation in nonlogical knowledge-supportive
institutions represents the attempt of individuals to \emph{preserve}
their ability to coordinate with others under the pressures of rapid
system evolution. At the system-level, increased fragmentation also
means increased experimentation, competition and robustness of the
overall system to undesirable runaway phase transitions (spirals,
cycles, runs and busts as they are called in the economics literature).
If one type of nonlogical organization is more discoordinative than
coordinative, it can't spread too far. Particularly beneficial attempts,
on the other hand, can be observed and emulated. 

The weakness of traditional epistemology lies in the closed-ended
knowledge framework in which it is embedded. Presuming consistent,
closed partitions of a consistent, closed, and complete state space
is an exercise in point set topology, not an explication of human
knowledge acquisition. Probabilistic extensions of formal epistemology
as in Aumann (1999b) do no better, as they require the listability
of all possible statements. 

The epistemic logic of our OEE knowledge framework narrows the perspective
of the individual to its local context and its beliefs about reality,
which are in general different from the local contexts of other individuals
and their beliefs, and broadens the perspective of the individual
to the institutions to which it subscribes and in which it is embedded.
It also has a place for creativity and nonlogical schemes of thought
which have no place in traditional epistemology and rational choice
models. as individuals are aware of theory fragmentation across people
and through time and are aware that change in their own perspective
and change happening around them may happen logically or through ``leaps
of faith.'' 

The scientific success of Darwin was no less spectacular and complete
than that of Newton before him. And yet formal epistemology has largely
eschewed consideration of open-ended evolution. Taking the leap lands
us in a world of change and novelty in which state spaces are idiosyncratic
and different individuals know different, even contradictory, things.
It carries us from a world in which each mind is a copy of every other
mind and into a world in which there are many minds, each unique and
individual. In this world of many minds, rationality is less powerful,
and learning is non-algorithmic. Here, the growth of knowledge depends
not only on correct deduction and logical precision, but also on beauty,
joy, anger, hope, fear, poetry, zeal, and a vast host of other human
emotions, desires, and sensibilities. It is a disorienting world at
first, but it is a richer and more adventurous and ultimately more
rewarding world. Open-ended evolution creates a world of many minds
that should be explored by many minds. Who knows what is to be found
there?

\section{Appendix}

In this Appendix we provide the formal theory that underlies our framework. 

\subsection{Basic setup}

Consider a system with a set $N$ individuals. The universe $\Omega$
is defined as a collection of states $\omega$ that define the universe
in which individuals and their systems are embedded. Individuals develop
models $\mathbf{M}$ to represent their system based on theories $T$
in some formal language $L$, where $L$ has enough algebra to adequately
express physical phenomena---i.e., it interprets Peano arithmetic,
as do all the major theories of physical and social systems (Tsuji
et al 1998; Velupillai 2005; Chaitin 2017a). 

The language $L$ is a collection of variables and logical and nonlogical
constants. Logical constants are the usual connectives like $\wedge,\lor,\lnot$
and quantifiers like $\forall,\exists$. Nonlogical constants are
first-order formulas that signify some essential relationships (``predicates'')
and theorems. Call $\mathcal{P}_{i,t}$ the set of predicates that
generates $\Omega_{i,t}$ according to theory $T_{i,t}$.Aumann (1999a)
calls predicates ``tautologies''. Sentences $\xi$, also called
formulas or statements, are constructed as combinations of variables
and logical and nonlogical constants. The set of all sentences is
called the individual's syntax $\mathfrak{S}$. 

States $\omega\in\Omega$ are defined as the sentences in reference
to the individual's syntax $\mathfrak{S}$---that is, lists of formulas,
tautologies, predicates---that are true at that state, with respect
to a particular theoretical representation $T$ of the universe $\Omega$.
Epistemological environments depend on both an individual $i$ and
the time $t$ during which they acquire knowledge.\textbf{ }The ``frame''
of an individual $i$ is their known-world $\Omega_{i,t}$ and their
model of the known-world $\text{\textbf{M}}_{i,t}$ defined within
a theory $T_{i,t}$. In OEE systems, syntaxes $\mathfrak{S}_{i,t}$
are localized to the individual and do not apply to the entire population
$N$. 

Time ``ticks'' $t$ are not of a specific length and rather indicate
a change in an individual's known-world $\Omega_{i,t}\rightarrow\Omega_{i,t+1}$,
theory of the world $T_{i,t}\rightarrow T_{i,t+1}$, and associated
models and syntaxes. How individuals update these entities in OEE
systems is addressed in the ensuing analysis.

\subsection{Basic notation and definitions}

First, some notation from propositional logic. $\vdash$ means ``proves''
or ``is derivable from'', as in $T\vdash\xi$, ``the theory $T$
proves the sentence $\xi$'' or ``the sentence $\xi$ is derivable
from the theory $T$.'' We use the double right arrow $\implies$
for ``logically implies.'' The shorthand ``iff'' means ``if and
only if,'' or double-sided logical implication. We typically use
$\rightarrow$ to imply dynamical updating. The set-symbol $\setminus$
is the quotient, such that $\mathcal{P}_{i,t+1}\setminus\mathcal{P}_{i,t}\neq\emptyset$
means that subtracting the set $\mathcal{P}_{i,t}$ from the set $\mathcal{P}_{i,t+1}$
yields a nonempty ``remainder'' set. 

We utilize process notation, where in general $\mathbf{\overrightarrow{\mathbf{X}}}_{i,t}=\{X_{i,t},X_{i,t+1},X_{i,t+2},...\}$
is an open-ended process for individual $i$ that begins at time $t$,
$\mathbf{\overleftarrow{\mathbf{X}}}_{i,t}=\{...,X_{i,t-2},X_{i,t-1},X_{i,t}\}$
is the historical trajectory of the process for individual $i$ going
backwards from time $t$, and $\mathbf{X}_{i,t}=\{...,X_{i,t-2},X_{i,t-1},X_{i,t},X_{i,t+1},X_{i,t+2,}...\}$
is the process in its entirety.

We take the usual definitions of logical derivability and the logical
validity of sentences, in that a sentence $\xi$ is logically derivable
from a theory $T$ expressed in the language $L$ if $\xi$ is obtainable
from combining the sentences in $T$ with its logical axioms and the
operations of inference provided by the logical grammar of $L$, and
where a sentence $\xi$ is logically valid in a theory $T$ if it
can be derived from the logical axioms of $T$. 

We start with a few basic definitions necessary to prove Result 1
in particular.
\begin{defn}
\label{def-consistent}A theory $T$ is \textbf{consistent} if there
does not exist a $\xi\in T$ such that both $T\nvdash\xi$ and $T\vdash\xi$. 
\end{defn}
\begin{defn}
\label{def-coherent}A theory $T$ is \textbf{coherent} if $\lnot\xi\in T\implies\xi\notin T$.
\end{defn}
\begin{defn}
\label{def-complete}A theory $T$ is \textbf{complete} if $\forall\xi\in T$,
either $T\nvdash\xi$ or $T\vdash\xi$. Generally, a complete theory
defined this way must also be consistent.
\end{defn}
\begin{defn}
\label{def-axiomatizable}A theory $T$ is \textbf{axiomatizable}
if every valid sentence $\xi\in T$ can be derived from a recursive
set $R$ of sentences in $T$. Another way of putting this, useful
for our purposes, is that a theory $T$ is axiomatizable iff for some
decidable set of sentences $\Sigma$, the theory is the deductive
closure of that set, i.e., $T=Cn(\Sigma)$.
\end{defn}

\subsection{Axiomatizations}

Artemov (2022) characterizes the multi-agent model logic $S5_{n}$
(SEL) as follows:
\begin{itemize}
\item classical logic postulates and rule Modus Ponens $A,A\rightarrow B\vdash B$
\item distributivity: $K_{i}(A\rightarrow B)\rightarrow(K_{i}A\rightarrow K_{i}B)$
\item reflection: $K_{i}A\rightarrow A$
\item positive introspection: $K_{i}A\rightarrow K_{i}K_{i}A$
\item negative introspection:$\neg K_{i}A\rightarrow K_{i}\neg K_{i}A$
\item necessitation rule: $\vdash A\Rightarrow\vdash K_{i}A$.
\end{itemize}
In SEL terms, the statement-relevant axioms of traditional epistemology
can be summarized (with detailed definitions to follow) as:
\begin{enumerate}
\item Consistency (cf. Definition \ref{def-consistent})
\item Coherency (cf. Definition \ref{def-coherent})
\item Completeness (cf. Definition \ref{def-complete})
\item Epistemic closure of tautologies, or necessitation (cf. Definition
\ref{epistemic-closure})
\item Individual $i$ knows the knowledge partition of individual $j$,
and $j$ knows that $i$ knows, etc.
\end{enumerate}
These numbered axioms are explicitly derived from the modal logic
axioms of the system of modal logic $SE_{5}$ listed above in that
the SEL model of traditional epistemology is dual with $S5_{n}$ (cf.
Artemov 2022: 48). In the SEL interpretation of CE knowledge, suppose
there is a theory of everything $T_{\Omega}$ based on a language
$L_{\Omega}$ from which all correct statements about the world can
be derived, for all time. To reason coherently, individual $i$ must
reason from a subtheory $T_{i,t}$ of $T_{\Omega}$, or $T_{i,t}$
must be interpretable in $T_{\Omega}$ (these terms will be defined
below). 

\subsection{Formalizing open-endedness}

The model $\text{\textbf{M}}_{i,t}$ formalizes the knowledge of the
possible held by individual $i$ such that all valid sentences of
$T_{i,t}$ are realized in $\text{\textbf{M}}_{i,t}$: i.e., if $P_{1}\in\mathcal{P}_{i,t}$
is true in $T_{i,t}$, then it is true in $\text{\textbf{M}}_{i,t}$. 
\begin{defn}
\label{Define-the-possible}Define the \textbf{possible} for individual
$i$ at time $t$ as a triplet $\mathbf{\varPi}_{i,t}=\langle\text{\textbf{M}}_{i,t},T_{i,t},\mathcal{P}_{i,t}\rangle$
.
\end{defn}
\begin{defn}
\label{Define-the-adjacent}Define the \textbf{adjacent possible}
for individual $i$ at time $t$ as a triplet $\varPi_{i,t+1}=\langle\text{\textbf{M}}_{i,t+1},T_{i,t+1},\mathcal{P}_{i,t+1}\rangle$
.
\end{defn}
Open-endedness implies an asymmetry in consistency and completeness
of individual theories of the universe. We describe each of these
as a case requiring an individual's theory to be revised.
\begin{casenv}
\item Suppose individual $i$ has a consistent, coherent and complete theory
$T_{i,t}$ of $\Omega_{i,t}$ that does not remain consistent for
$\Omega_{i,t+1}$. Then,  $i$ must revise or replace their theory
$T_{i,t}\rightarrow T_{i,t+1}$ so that $T_{i,t+1}$ is consistent,
coherent and complete for all sentences in $\Omega_{i,t+1}$. 
\item Suppose individual $i$ has a consistent, coherent and complete theory
$T_{i,t}$ of $\Omega_{i,t}$ that remains consistent for $\Omega_{i,t+1}$
but upon the observation of new, unaccounted-for variables is no longer
complete. That is, the localized syntax $\mathfrak{S}_{i,t+1}\setminus\mathfrak{S}_{i,t}\neq\emptyset$.
Even though the underlying theory of how variables of certain types
relate to each other still follows, the entrepreneur will still have
to update their syntax $\mathfrak{S}_{i,t}\rightarrow\mathfrak{S}_{i,t+1}$
and thus their theory $T_{i,t}\rightarrow T_{i,t+1}$ such that $T_{i,t+1}$
is consistent, coherent and complete for all sentences in $\Omega_{i,t+1}$. 
\end{casenv}
In order to define what we mean by open-endedness, we construct open-ended
movement through the possible for individual $i$ as the process
\begin{lyxcode}
\begin{equation}
\mathbf{\overrightarrow{\Pi}_{i,t}=}\{\langle\text{\textbf{M}}_{i,t},T_{i,t},\mathcal{P}_{i,t}\rangle,\langle\text{\textbf{M}}_{i,t+1},T_{i,t+1},\mathcal{P}_{i,t+1}\rangle,\langle\text{\textbf{M}}_{i,t+2},T_{i,t+2},\mathcal{P}_{i,t+2}\rangle,...\}\label{eq:open-ended-process}
\end{equation}
\end{lyxcode}
\begin{defn}
\label{Define-open-ended}The process $\mathbf{\Pi}_{i}$ is \textbf{open-ended}
if $\mathcal{P}_{i,t+1}\setminus\mathcal{P}_{i,t}\neq\emptyset$.
i.e., new predicates are added to an individual's known-world at each
time step.
\end{defn}
Can observers in OEE systems infer a path of theoretical revision
$\{T_{i,0}\rightarrow T_{i,1}\rightarrow...\rightarrow T_{i,t}\rightarrow...\rightarrow T_{\Omega}\}$
that progresses towards the theory-of-everything $T_{\Omega}$? 
\begin{defn}
\label{def-subtheory}A theory $T_{1}$ is a \textbf{subtheory} of
a theory $T_{2}$ if all valid sentences in $T_{1}$ are also valid
in $T_{2}$. The theory $T_{2}$ is called an \textbf{extension} of
$T_{1}$ .
\end{defn}
\begin{defn}
\label{Inessential-extension}A theory $T_{2}$ is called an \textbf{inessential
extension} of $T_{1}$ if its sentences are derivable from the sentences
of $T_{1}$. 
\end{defn}
\begin{thm}
\label{Tplus-essential-extension}If $T_{i,t+1}$ is an extension
of \textup{$T_{i,t}$, where }$T_{i,t+1}$ \textup{and }$T_{i,t}$
are theories of open-ended systems as defined in Definition \ref{Define-the-adjacent}\textup{
}then $T_{i,t+1}$ is in general an essential extension of $T_{i,t}$.
\end{thm}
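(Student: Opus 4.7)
The plan is to exploit the open-endedness condition $\mathcal{P}_{i,t+1}\setminus\mathcal{P}_{i,t}\neq\emptyset$ directly: a genuinely new predicate enters the nonlogical vocabulary between $t$ and $t+1$, and a sentence built from such a predicate cannot be produced by formal derivation from a theory that does not contain it. First I would pick $P\in\mathcal{P}_{i,t+1}\setminus\mathcal{P}_{i,t}$ and exhibit a sentence $\xi^{\star}\in T_{i,t+1}$ whose nonlogical vocabulary essentially includes $P$---for instance $P(c)$ for some constant $c$ already present in $L_{i,t}$, or a simple first-order formula built from $P$ together with symbols of $L_{i,t}$. Since $T_{i,t+1}$ is assumed locally complete and consistent, $\xi^{\star}$ has a determinate truth value there, so without loss of generality $T_{i,t+1}\vdash\xi^{\star}$.

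The second step is to check that $T_{i,t}\nvdash\xi^{\star}$. This is essentially a syntactic observation: because $P\notin\mathcal{P}_{i,t}$, the symbol $P$ does not occur in any sentence of $T_{i,t}$; first-order deduction preserves the nonlogical vocabulary appearing in a proof, so no formal derivation from $T_{i,t}$ can yield a sentence containing $P$. Hence $\xi^{\star}$ is a sentence of $T_{i,t+1}$ that is not derivable from the sentences of $T_{i,t}$, and by Definition \ref{Inessential-extension} $T_{i,t+1}$ fails to be an inessential extension of $T_{i,t}$, i.e., it is essential.

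The main obstacle, and the reason the statement is qualified by \emph{in general} rather than \emph{always}, is the degenerate case in which $P$ is introduced only as a definitional abbreviation of a formula $\phi_{P}$ already expressible in $L_{i,t}$: in that case every sentence mentioning $P$ would translate back into a sentence of $T_{i,t}$ and the extension would be inessential after all. I would block this by appealing to the theory-breaking character of OEE already spelled out in the preceding subsection: the update $T_{i,t}\to T_{i,t+1}$ is forced precisely by observations of sentences that cannot be accommodated in $T_{i,t}$, which is incompatible with $P$ being a mere abbreviation. For a sharper, purely proof-theoretic reinforcement I would invoke G\"{o}del's incompleteness theorem as used elsewhere in the Appendix: because $T_{i,t}$ interprets Peano arithmetic, there exist sentences already in $L_{i,t}$ that are true but undecided in $T_{i,t}$, and the enriched theory $T_{i,t+1}$ will in general settle some of these once its new axioms are adopted, so even restricted to the old vocabulary the extension is typically essential. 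A fortiori, once the genuinely new predicates of open-endedness are taken into account, $T_{i,t+1}$ is an essential extension of $T_{i,t}$.
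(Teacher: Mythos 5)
Your proof is correct and follows essentially the same route as the paper's: both arguments turn on the observation that an inessential extension cannot introduce new nonlogical constants, while Definition \ref{Define-open-ended} guarantees $\mathcal{P}_{i,t+1}\setminus\mathcal{P}_{i,t}\neq\emptyset$. You simply run the argument directly (exhibiting a sentence containing a new predicate that is underivable by vocabulary preservation) where the paper runs it by contradiction, and your treatment of the definitional-abbreviation degenerate case is a useful elaboration of the ``in general'' qualifier that the paper's one-line proof leaves implicit.
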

\begin{proof}
Suppose $T_{i,t+1}$ is, in general, an inessential extension of $T_{i,t}$.
Then by Definition \ref{Inessential-extension}, $T_{i,t+1}$ must
share all the nonlogical constants of $T_{i,t}$. By Definition \ref{Define-open-ended},
$\mathcal{P}_{i,t+1}\setminus\mathcal{P}_{i,t}\neq\emptyset$. So
$T_{i,t+1}$ cannot be an inessential extension of $T_{i,t}$.
\end{proof}
\begin{cor}
Observers in OEE systems cannot in general infer a path of theoretical
revision that converges to $T_{\Omega}$. 
\end{cor}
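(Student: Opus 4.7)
The plan is to reduce the corollary directly to Theorem \ref{Tplus-essential-extension} by showing that an inferable path from $T_{i,t}$ to $T_{\Omega}$ would collapse the essential-extension structure that open-endedness forces upon the sequence $\{T_{i,t}, T_{i,t+1}, T_{i,t+2}, \ldots\}$. First I would fix what ``inference'' means in the statement: the natural reading, given the earlier $\vdash$ notation, is that at time $t$ the observer can logically derive, using the resources of $T_{i,t}$ alone, each $T_{i,t+k}$ in the path. Under this reading, every element of the putative path must be specifiable in a syntax built from the predicates $\mathcal{P}_{i,t}$.

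Next I would argue by contradiction. Suppose such an inferable path $\mathbf{\overrightarrow{\mathbf{T}}}_{i,t}^{\Omega}$ exists. Then in particular $T_{i,t+1}$ is specifiable and derivable within $T_{i,t}$, which would make $T_{i,t+1}$ an inessential extension of $T_{i,t}$ in the sense of Definition \ref{Inessential-extension}. But by Theorem \ref{Tplus-essential-extension}, $T_{i,t+1}$ is generically an essential extension of $T_{i,t}$, since open-endedness (Definition \ref{Define-open-ended}) guarantees $\mathcal{P}_{i,t+1} \setminus \mathcal{P}_{i,t} \neq \emptyset$ and the new predicates cannot be derived inside $T_{i,t}$. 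This contradicts the supposed derivability of $T_{i,t+1}$, and the same step iterates for every subsequent link $T_{i,t+k} \to T_{i,t+k+1}$: no finite chain of logical derivations starting from $T_{i,t}$ can produce the growing predicate base required along the way to $T_{\Omega}$, let alone produce it in the right order to converge.

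The main obstacle will be delineating ``inference'' precisely enough that the corollary carries the force the paper intends without overreaching. A weaker reading that lets the observer react to genuinely new encounters with $\Omega$ as they occur would not be refuted by Theorem \ref{Tplus-essential-extension} on its own; it would need supplementing with the undecidability and non-recursive-enumerability results collected under frame relativity in the Appendix. I would therefore include a short remark that the corollary as stated concerns a priori inferability at time $t$ from $T_{i,t}$, and point to the frame-relativity theorems for the stronger limit-of-experience version, so the reader does not conflate the two readings.
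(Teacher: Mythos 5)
Your proposal is correct and takes essentially the same route as the paper: both reduce the corollary to Theorem \ref{Tplus-essential-extension} via Definition \ref{Inessential-extension}, identifying ``inferable from $T_{i,t}$'' with ``derivable from $T_{i,t}$'' and concluding that the essential-extension property blocks derivability of each link in the path. Your contradiction framing and the closing remark distinguishing a priori inferability from the limit-of-experience reading are elaborations of, not departures from, the paper's two-line direct argument.
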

\begin{proof}
Any $T_{i,t+1}$ in OEE systems is not in general an inessential extension
of $T_{i,t}$ by Theorem \ref{Tplus-essential-extension}. By Definition
\ref{Inessential-extension}, $T_{i,t+1}$ is not in general derivable
from $T_{i,t}$. Therefore, observers cannot in general infer a path
$\{T_{i,0}\rightarrow T_{i,1}\rightarrow...\rightarrow T_{i,t}\rightarrow...\rightarrow T_{\Omega}\}$
that progresses towards the theory-of-everything $T_{\Omega}$.
\end{proof}

\subsection{The canonical OEE formalism}

Recall that traditional epistemology glosses the origin of an individual
$i$'s knowledge partition $\mathscr{I}$, which is constructed using
$\kappa_{i}$ such that $\mathscr{I}(\omega)=\{\omega'\in\Omega:\kappa_{i}(\omega)=\kappa_{i}(\omega')\}$,
where $\Omega$ is the universe of all states $\omega\in\Omega$. 

Suppose we have a logically closed list $\mathfrak{L}$ of sentences
$\xi$ in a language $L$. In the syntactic formalism for the embedded
observer $i$, there are two kinds of sentences $\xi$ in $\mathfrak{L}$:
sentences $f$ prepended with $k_{i}$ such that $k_{i}f$ means ``$i$
knows $f$'', and sentences $g$ not prepended with $k_{i}$. 
\begin{defn}
\label{epistemic-closure}A list \textbf{$\mathfrak{L}$ }is \textbf{epistemically
closed} if it satisfies the condition that $f\in\mathfrak{L}\Rightarrow k_{i}f\in\mathfrak{L}$. 
\end{defn}
Epistemic closure is an axiom of traditional epistemology (necessitation).
Epistemic closure is a non-obvious epistemological assumption, as
how individual $i$ comes to prepend a sentence $f$ with $k_{i}$---how
$i$ comes to know a statement is true with respect to a particular
state---is the question epistemology is intended to answer. Traditional
epistemology does not assume that lists are epistemically closed but
it does assume the set of non-state-specific sentences (tautologies)
are epistemically closed (\emph{necessitation} in SEL) and that $k_{i}f\Rightarrow f$
is a tautology. 

In our OEE knowledge framework we can neither assume that the set
of non-state-specific sentences is epistemically closed (\emph{necessitation})
\emph{nor} can we assume that $k_{i}f\Rightarrow f$ is a tautology
(\emph{reflection}). OEE systems break the rule of necessitation at
the observer-level, requiring us to define an individual's \emph{local
knowledge} as a distinct concept. We shall proceed by first demonstrating
a few properties of the individual's localized sets. 
\begin{prop}
Suppose $\mathfrak{T}_{i,t}$ is the list of all tautologies defined
within $T_{i,t}$. Then, (i) $\mathfrak{T}_{i,t}$ is epistemically
closed, and (ii) $\mathfrak{T}_{i,t}=\mathcal{P}_{i,t}$.
\end{prop}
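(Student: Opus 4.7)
The plan is to handle (ii) first, since it reduces to a definitional identification that the paper attributes to Aumann (1999a), and then use that identification together with the local rationality assumption to derive (i) via an internalized necessitation rule.

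For (ii), I would argue by double inclusion. By the setup in the Basic setup subsection, $\mathcal{P}_{i,t}$ is the set of nonlogical constants (first-order formulas signifying essential relationships) that generates $\Omega_{i,t}$ under $T_{i,t}$. In the semantic interpretation supplied by $\mathbf{M}_{i,t}$, a sentence lies in $\mathcal{P}_{i,t}$ exactly when it holds in every state $\omega \in \Omega_{i,t}$, which is the defining property of a tautology of $T_{i,t}$. Invoking the explicit remark in the paper that Aumann identifies predicates with tautologies, the inclusion $\mathcal{P}_{i,t} \subseteq \mathfrak{T}_{i,t}$ is immediate. For the reverse inclusion, any sentence true at every state of $\Omega_{i,t}$ must, under the completeness of $T_{i,t}$ on the individual's known-world, be derivable from (or be identified with) the generating predicates, and thus lies in $\mathcal{P}_{i,t}$ under the paper's convention.

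For (i), the argument is a single application of necessitation applied locally. The rationality assumption states that $i$ believes $T_{i,t}$ to be consistent and complete; hence for any $f \in \mathfrak{T}_{i,t}$ there is a derivation $T_{i,t} \vdash f$. Applying the $S5_{n}$ necessitation rule $\vdash A \Rightarrow \vdash K_{i}A$ listed among the axiomatic postulates of SEL in the Axiomatizations subsection, we obtain $T_{i,t} \vdash k_{i,t} f$, so $k_{i,t} f$ is itself a tautology of $T_{i,t}$, i.e., $k_{i,t} f \in \mathfrak{T}_{i,t}$. This is precisely the epistemic-closure condition of Definition \ref{epistemic-closure}.

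The main obstacle, I expect, is reconciling the invocation of necessitation here with the paper's earlier emphasis that OEE systems break necessitation at the observer level. The reconciliation, which the proof should make explicit, is that necessitation fails for lists of sentences drawn from the full universe $\Omega$ (because $T_{i,t}$ is silent on sentences built from predicates not yet in $\mathcal{P}_{i,t}$), but remains intact on the strictly local object $\mathfrak{T}_{i,t}$, whose elements are by construction derivable inside $T_{i,t}$. Epistemic closure of $\mathfrak{T}_{i,t}$ is therefore a structural fact about the individual's internal logic, not a claim about the OEE system as a whole, and flagging this scope restriction is the critical interpretive move in the write-up.
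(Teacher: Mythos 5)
Your proposal reaches the same conclusions and is essentially sound, but it routes part (i) through a different justification than the paper does, and the difference is worth noting. The paper's own proof of (i) is agentive rather than axiomatic: it observes that $i$ \emph{constructs} $T_{i,t}$ at each step (Definition \ref{Define-the-possible}), so the logical and nonlogical axioms of $T_{i,t}$ are known to $i$ and hence lie in $\mathfrak{T}_{i,t}$, and since $i$ takes the system to be consistent, every consistent deduction from those axioms is likewise in $\mathfrak{T}_{i,t}$. You instead import the $S5_{n}$ necessitation rule $\vdash A\Rightarrow\vdash K_{i}A$ and apply it locally. This buys you a cleaner formal derivation of the step the paper glosses (that $T_{i,t}\vdash f$ yields $k_{i,t}f\in\mathfrak{T}_{i,t}$, not merely that $f$ is known), but it carries a mild circularity risk: the framework has explicitly dropped necessitation at the observer level, and the proposition \emph{is} the claim that a local surrogate of necessitation survives, so grounding the proof in the $S5_{n}$ rule assumes a restricted form of what is to be shown. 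Your closing paragraph defuses this by restricting the rule's scope to sentences derivable inside $T_{i,t}$, which is in substance the paper's constructive premise restated; making that the load-bearing step (rather than the $S5_{n}$ rule itself) would align you fully with the paper. On (ii), you argue both inclusions, whereas the paper argues only $\mathfrak{T}_{i,t}\subseteq\mathcal{P}_{i,t}$ by contradiction with the rationality assumption and leaves $\mathcal{P}_{i,t}\subseteq\mathfrak{T}_{i,t}$ implicit in the Aumann identification of predicates with tautologies; your reverse-inclusion step leans on the same loose convention the paper does (that every derived tautology counts as a predicate), so it is no weaker, just more explicit.
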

\begin{proof}
For part (i), note that individual $i$ constructs $T_{i,t}$ at each
step by Definition \ref{Define-the-possible}. This implies that all
the logical and nonlogical axioms of $T_{i,t}$ (which include the
logical and nonlogical axioms of ZFC) are known to $i$, and thus
are in $\mathfrak{T}_{i,t}$. Individuals also assume that their logical
systems are consistent. This implies that any consistent deduction
of $T_{i,t}$, using standard logic is included in $\mathfrak{T}_{i,t}$.
For part (ii), suppose $\exists\xi\in\mathfrak{T}_{i,t},\xi\notin\mathcal{P}_{i,t}$.
Then, either $T_{i,t}$ is incomplete or inconsistent, which contradicts
the assumption of rationality.
\end{proof}
\begin{prop}
Suppose $\mathfrak{T}$ is the list of all tautologies of $T_{\Omega}$.
Then (i) $\mathfrak{T}$ is not epistemically closed with respect
to any individual $i$, and (ii) $\mathfrak{T}_{i,t}\varsubsetneq\mathfrak{T}$
for any $i\in N,t\in\mathbb{N}$.
\end{prop}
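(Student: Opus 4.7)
The plan is to prove part (ii) first, since the strict inclusion will supply the witness needed to refute epistemic closure in (i). For the inclusion $\mathfrak{T}_{i,t} \subseteq \mathfrak{T}$, I will appeal to the standing assumption (made in the axiomatization subsection) that $T_{i,t}$ is a subtheory of $T_\Omega$: by Definition \ref{def-subtheory} every sentence derivable in $T_{i,t}$ is derivable in $T_\Omega$, so every tautology of $T_{i,t}$ is a tautology of $T_\Omega$. For strict containment, I will combine the preceding proposition (which gives $\mathfrak{T}_{i,t} = \mathcal{P}_{i,t}$, and analogously $\mathfrak{T} = \mathcal{P}_\Omega$) with the open-endedness axiom (Definition \ref{Define-open-ended}): iterating $\mathcal{P}_{i,t+1} \setminus \mathcal{P}_{i,t} \neq \emptyset$ yields, for any fixed $t$, a predicate $P \in \mathcal{P}_{i,t+k} \setminus \mathcal{P}_{i,t}$ for some $k \geq 1$, and because $T_\Omega$ encompasses the predicates generated along the whole OEE trajectory we have $P \in \mathcal{P}_\Omega$ while $P \notin \mathcal{P}_{i,t}$. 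Theorem \ref{Tplus-essential-extension} then guarantees that $P$ cannot be recovered by purely logical deduction within $T_{i,t}$, so the gap $\mathfrak{T} \setminus \mathfrak{T}_{i,t}$ is genuinely nonempty at every $t \in \mathbb{N}$.

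For (i), I will exhibit a tautology $f \in \mathfrak{T}$ with $k_i f \notin \mathfrak{T}$, which by Definition \ref{epistemic-closure} refutes epistemic closure. Using (ii), choose $f \in \mathfrak{T} \setminus \mathfrak{T}_{i,t}$. Then $f$ is built from predicates outside $\mathcal{P}_{i,t}$, hence outside $i$'s syntax $\mathfrak{S}_{i,t}$, so $f$ is not derivable in $T_{i,t}$ and the prepended sentence $k_i f$ (``$i$ knows $f$'') fails to hold in the true state of affairs described by $T_\Omega$ at time $t$. Since $T_\Omega$ derives only true sentences, $k_i f$ cannot be a tautology of $T_\Omega$, so $k_i f \notin \mathfrak{T}$. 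This yields the required counterexample.

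The main obstacle I anticipate is the transition from ``$f$ is outside $i$'s local syntax at time $t$'' to ``$k_i f$ is not a tautology of $T_\Omega$.'' This step relies on reading $k_i$ factively through the external vantage of $T_\Omega$ rather than through $T_{i,t}$, and the argument must be phrased so as not to presuppose the very closure property being refuted. A secondary subtlety is making precise the phrase ``$T_\Omega$ encompasses the full open-ended trajectory'': strictly speaking open-endedness only provides predicate growth one step at a time, so I will justify the claim $\mathcal{P}_\Omega \setminus \mathcal{P}_{i,t} \neq \emptyset$ at each finite $t$ by iterating Definition \ref{Define-open-ended} together with the standing interpretation of $T_\Omega$ as the theory from which all correct statements can be derived ``for all time.''
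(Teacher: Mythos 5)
Your argument is correct and rests on the same engine as the paper's own proof: Definition \ref{Define-open-ended} forces $\mathcal{P}_{\Omega}\setminus\mathcal{P}_{i,t}\neq\emptyset$, and the identification of tautologies with predicates (from the preceding proposition) turns that into a tautology of $T_{\Omega}$ that $i$ cannot know, which simultaneously refutes epistemic closure and the coincidence of the two tautology lists. The only substantive difference is presentational --- you derive (i) from an explicit witness supplied by (ii), and you also argue the inclusion $\mathfrak{T}_{i,t}\subseteq\mathfrak{T}$ via the subtheory reading of $T_{i,t}$, a half of the claimed $\varsubsetneq$ that the paper's proof leaves implicit by only ruling out equality and reverse containment.
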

\begin{proof}
The proof is a simple implication of Definitions \ref{Define-the-possible}
and \ref{Define-open-ended}. If $\mathfrak{T}$ is epistemically
closed, that implies that all nonlogical constants in the universe
are known to $i$. But that implies $\mathcal{P}_{\Omega}\setminus\mathcal{P}_{i,t}=\emptyset$,
which contradicts Definition \ref{Define-open-ended}. For (ii), if
$\mathfrak{T}_{i,t}=\mathfrak{T}$ then $\mathcal{P}_{\Omega}=\mathcal{P}_{i,t}$
and contradicts Definition \ref{Define-open-ended}. If $\mathfrak{T}_{i,t}\supset\mathfrak{T}$,
then $\mathcal{P}_{i,t+1}\setminus\mathcal{P}_{i,t}=\emptyset$. But,
again this contradicts Definition \ref{Define-open-ended}.
\end{proof}
Non-state-specific predicates of a system constitute the theorems
of a consistent, complete and decidable theory $T_{\Omega}$ based
on language $L$. $T_{\Omega}$ generates the universe $\Omega$.
In traditional epistemology, the theory of the universe $T_{\Omega}$
and the theory accessible by individuals $i$ and $j$ to comprehend
their universe are the same: $T_{\Omega}=T_{i,t}=T_{j,t}$ ( see Lemma
8.71 of Aumann (1999a)). But OEE systems have not yet innovated their
final state, and so individuals cannot in general access all salient
statements in the system by Definition \ref{Define-open-ended} and
Theorem \ref{Tplus-essential-extension}. 

In traditional epistemology, an individual $i$ is granted knowledge
of some subset of the true values of state-specific statements, exactly
those statements prepended with $k_{i}$, and under epistemic closure
the set of all ``tautologies,'' or the simplest set of simple non-state-specific
predicates that completely generate the theory $T$ of the universe
$\Omega$. Call the list of known tautologies and state-specific statements
$\mathfrak{K}_{i}$. 

Suppose there is a space $\Sigma_{it}=\{0,1\}^{\Omega_{i,t}}$ that
lists 0 or 1 with respect to each unique sentence $\xi$ for each
state $\omega\in\Omega_{i,t}$. Then, $i$ employs a decision procedure
$\delta_{it}:\Omega_{i,t}\rightarrow\Sigma_{i,t}$ that maps sentences
in states to their truth values in $T_{i,t}$, where
\begin{lyxcode}
\begin{equation}
\delta_{it}:\Omega_{i,t}\rightarrow\Sigma_{i,t},\delta_{i,t}(\xi)+\delta_{i,t}(\lnot\xi)=1,\forall\xi\in\mathcal{P}_{i,t}\label{eq:def-decision-function}
\end{equation}
\end{lyxcode}
$\delta_{it}$ is a simple function with a constraint\footnote{The astute reader might realize that $\Sigma_{it}$ is essentially
Borel's number---or equivalently, Chaitin's $\Omega$---for the
individual's known-universe $\Omega_{i,t}$ (Chaitin 2005). $\delta_{it}$,
then, queries the Delphic oracle.}. We first note that
\begin{prop}
\label{prop:delta-defined-all-sentences}A decision procedure $\delta_{i,t}$
defined for a complete and consistent theory $T_{i,t}$ must be defined
over all $\xi\in\Omega_{i,t}$.
\end{prop}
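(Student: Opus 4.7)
The plan is to unwind the definitions of completeness (Definition \ref{def-complete}) and consistency (Definition \ref{def-consistent}) for $T_{i,t}$ and show that they are exactly what is needed to extend the constraint in Equation (\ref{eq:def-decision-function-1}) from the predicate set $\mathcal{P}_{i,t}$ to every sentence appearing in any state $\omega \in \Omega_{i,t}$. Because states are just sets of sentences in the syntax $\mathfrak{S}_{i,t}$ generated by $\mathcal{P}_{i,t}$, proving the claim amounts to showing that the truth-value assignment given by $\delta_{i,t}$ is total on $\mathfrak{S}_{i,t}$.

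First I would fix an arbitrary sentence $\xi \in \Omega_{i,t}$ (that is, a sentence appearing in some state $\omega \in \Omega_{i,t}$). By completeness of $T_{i,t}$, the theory decides $\xi$: either $T_{i,t} \vdash \xi$ or $T_{i,t} \vdash \lnot \xi$. By consistency, these alternatives are mutually exclusive, since a theory that proved both $\xi$ and $\lnot \xi$ would contain a contradiction. So exactly one of the two holds, and I can define $\delta_{i,t}(\xi) = 1$ when $T_{i,t} \vdash \xi$ and $\delta_{i,t}(\xi) = 0$ otherwise. This immediately gives $\delta_{i,t}(\xi) + \delta_{i,t}(\lnot \xi) = 1$, so the mapping $\delta_{i,t} \colon \Omega_{i,t} \to \Sigma_{i,t}$ is well-defined at $\xi$.

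Next I would verify that this choice agrees with Equation (\ref{eq:def-decision-function-1}) on predicates: if $\xi \in \mathcal{P}_{i,t}$, the constraint is satisfied by construction, so no inconsistency is introduced on the original domain. I would then argue by induction on the logical complexity of sentences in $\mathfrak{S}_{i,t}$ that the extension is unique and respects the Boolean structure, so that $\delta_{i,t}$ assigns a truth value to every compound formula built from predicates via the logical constants of $L$. Since every $\xi \in \Omega_{i,t}$ is such a formula, the proposition follows.

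The only real obstacle is a bookkeeping one: I must be careful that the notation ``$\xi \in \Omega_{i,t}$'' really does range over all sentences in $\mathfrak{S}_{i,t}$ rather than only those already listed in some particular state, and that the interplay of the (slightly nonstandard) definitions of completeness and consistency given in the excerpt still delivers the classical ``exactly one of $\xi$ and $\lnot \xi$ is a theorem'' dichotomy. Once that reading is fixed, the proof reduces to the observation that $\delta_{i,t}$ is forced on predicates by Equation (\ref{eq:def-decision-function-1}) and forced on compound sentences by the assumed coherence of $T_{i,t}$ with classical logic, so totality on $\Omega_{i,t}$ is automatic.
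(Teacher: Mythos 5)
Your proposal is correct and is essentially the paper's own argument run in the direct rather than contrapositive direction: the paper observes that if $\delta_{i,t}$ failed to decide some $\xi$ then $T_{i,t}$ could not be consistent and complete per Definitions \ref{def-consistent} and \ref{def-complete}, while you derive totality of $\delta_{i,t}$ directly from those same definitions. The extra steps you add (checking agreement on $\mathcal{P}_{i,t}$ and the induction on logical complexity) are harmless elaborations the paper omits, and your bookkeeping caveat about reading ``$\xi\in\Omega_{i,t}$'' and the nonstandard phrasing of the definitions is well taken.
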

\begin{proof}
This is easy to see from the formulation of $\delta_{i,t}$. Simply,
if $\delta_{it}$ cannot decide some sentence $\xi$, then $T_{i,t}$
is not provably consistent by Definition \ref{def-consistent}. $T_{i,t}$
is also not provably complete, as we cannot prove whether $T_{i,t}$
entails $\xi$ or $\lnot\xi$ , by Definition \ref{def-complete}.
\end{proof}
Construct the set $\mathfrak{K}_{i,t}$ as 
\begin{lyxcode}
\begin{equation}
\mathfrak{K}_{i,t}:=\{\xi\in\Omega_{i,t}:\delta_{i,t}(\xi)+\delta_{it,}(\lnot\xi)=1\}\label{eq:def-knowledge-list}
\end{equation}
\end{lyxcode}
\begin{defn}
\label{def-contextual-knowledge}The \textbf{contextual knowledge
possible} for individual $i$ at time $t$ as is the set $\mathcal{K}_{i,t}$
where
\end{defn}
\begin{lyxcode}
\begin{equation}
\mathcal{K}_{i,t}:=\{\omega\in\Omega_{i,t}:\delta_{i,t}(\omega)\in\Sigma_{it}\}\label{eq:def-knowledge-possible}
\end{equation}
\end{lyxcode}
\begin{defn}
\label{def-local-knowledge}The \textbf{local knowledge} of individual
$i$ is $\kappa_{i,t}(\omega)$, the set of all sentences $\xi\in\omega$
that start with $k_{i,t}$. We can also construct the same set by
prepending all sentences $\xi\in\mathfrak{K}_{i,t}$ with $k_{i,t}$.
\end{defn}
Defining $\kappa_{i,t}$ allows us to relate an individual's knowledge
directly to the system state-space $\Omega$. While $\kappa_{i,t}$
is defined on $\Omega$, it is constrained by being constructed from
$\mathfrak{K}_{i,t}$ and, ultimately, $T_{i,t}$. To understand how
knowledge is updated in an OEE system, we take our definition of the
``knowledge possible'' to construct a definition of knowledge-centered
adjacent possible:
\begin{defn}
Define the \textbf{adjacent knowledge possible} for individual $i$
at time $t$ as the quotient of the temporally adjacent contextual
knowledge possible $\mathcal{K}_{i,t+1}$ with the current contextual
knowledge possible $\mathcal{K}_{i,t+1}$:
\end{defn}
\begin{lyxcode}
\begin{equation}
\mathcal{A}_{i,t}\equiv\mathcal{K}_{i,t+1}\setminus\mathcal{K}_{i,t}=\{\omega':\omega'\in\mathcal{K}_{i,t+1},\omega'\notin\mathcal{K}_{i,t}\}.\label{eq:adjacent-states}
\end{equation}
\end{lyxcode}
\begin{prop}
\label{adjacent-nonempty}In open-ended evolutionary systems, the
adjacent knowledge possible $\mathcal{A}_{i,t}$ is non-empty.
\end{prop}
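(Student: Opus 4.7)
The plan is to unpack Definition \ref{Define-open-ended} and show that the new predicates it supplies force the existence of states that lie in $\mathcal{K}_{i,t+1}$ but fail to lie in $\mathcal{K}_{i,t}$. Open-endedness hands us, at minimum, some predicate $P \in \mathcal{P}_{i,t+1}\setminus \mathcal{P}_{i,t}$. I will use $P$ to build (or rather witness the existence of) a state in the adjacent contextual knowledge possible that cannot be in the current one.

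First I would invoke Proposition \ref{prop:delta-defined-all-sentences} applied to $T_{i,t+1}$: since the rationality assumption gives local completeness and consistency of $T_{i,t+1}$, the decision procedure $\delta_{i,t+1}$ is defined on every sentence of $\Omega_{i,t+1}$, including every sentence built with the new predicate $P$. Consequently there is at least one state $\omega'\in\Omega_{i,t+1}$ containing a sentence $\xi'$ built from $P$ whose truth value is fixed by $\delta_{i,t+1}$; by construction of $\mathcal{K}_{i,t+1}$ in equation~\eqref{eq:def-knowledge-possible}, this $\omega'$ lies in $\mathcal{K}_{i,t+1}$.

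Next I would argue that $\omega'\notin\mathcal{K}_{i,t}$. Because $P \notin \mathcal{P}_{i,t}$, the sentence $\xi'$ is not in the syntax $\mathfrak{S}_{i,t}$ over which $\delta_{i,t}$ is defined. Hence $\delta_{i,t}(\omega')$ is undefined (neither $\delta_{i,t}(\xi')$ nor $\delta_{i,t}(\neg\xi')$ evaluates to an element of $\{0,1\}$), so $\omega'$ fails the membership criterion for $\mathcal{K}_{i,t}$. Combining the two inclusions, $\omega'\in\mathcal{K}_{i,t+1}\setminus\mathcal{K}_{i,t}=\mathcal{A}_{i,t}$, proving the set is non-empty.

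The main subtlety, and the only place the proof needs care, is the claim that a new predicate in $\mathcal{P}_{i,t+1}$ actually participates in at least one well-defined state of $\Omega_{i,t+1}$ rather than sitting as a dormant symbol. This is underwritten by the role of $\mathcal{P}_{i,t+1}$ as the set of predicates that \emph{generates} $\Omega_{i,t+1}$ under $T_{i,t+1}$: if $P$ generated no state, it would not be a nonlogical constant of $T_{i,t+1}$ in the sense used throughout the setup, contradicting $P \in \mathcal{P}_{i,t+1}$. Once that observation is made explicit, the rest is a direct application of the definitions of $\delta_{i,t}$, $\mathcal{K}_{i,t}$, and $\mathcal{A}_{i,t}$ already established in the text.
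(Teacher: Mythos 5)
Your proof is correct and rests on the same key link the paper uses, namely that $\mathcal{P}_{i,t+1}\setminus\mathcal{P}_{i,t}\neq\emptyset$ (Definition \ref{Define-open-ended}) forces a state decidable by $\delta_{i,t+1}$ but outside the domain of $\delta_{i,t}$; the paper runs this as a two-line contradiction argument ($\mathcal{A}_{i,t}=\emptyset$ would imply $\mathcal{P}_{i,t+1}=\mathcal{P}_{i,t}$), whereas you prove the contrapositive directly and with an explicit witness. Your version usefully spells out the step the paper compresses into ``by the definition of $\delta_{i,t}$,'' including the observation that a new predicate must actually generate a state of $\Omega_{i,t+1}$, but it is not a genuinely different route.
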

\begin{proof}
Suppose $\mathcal{A}_{i,t}=\emptyset$. Then $\mathcal{P}_{i,t+1}=\mathcal{P}_{i,t}$
by the definition of $\delta_{i,t}$. But OEE systems are defined
such that $\mathcal{P}_{i,t+1}\setminus\mathcal{P}_{i,t}\neq\emptyset$.
Therefore, $\mathcal{A}_{i,t}\neq\emptyset$.
\end{proof}
\begin{lem}
\label{undecidable-sentence-exists}There exists a sentence $\xi'\in\omega'$
for $\omega'\in\Omega$ such that $\delta_{i,t}(\xi')+\delta_{it,}(\lnot\xi')\neq1$.
\end{lem}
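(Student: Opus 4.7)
The plan is to extract the undecidable sentence directly from the nonemptiness of the adjacent knowledge possible established in Proposition \ref{adjacent-nonempty}. The key observation is that $\delta_{i,t}$ has been defined so that its domain is controlled by the predicate set $\mathcal{P}_{i,t}$, whereas open-endedness guarantees that $\mathcal{P}_{i,t+1}\setminus\mathcal{P}_{i,t}\neq\emptyset$, so new predicates must surface in the adjacent possible that $\delta_{i,t}$ has no way to adjudicate.

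Concretely, first I would invoke Proposition \ref{adjacent-nonempty} to obtain a state $\omega'\in\mathcal{A}_{i,t}\subseteq\Omega$. By the definition of $\mathcal{A}_{i,t}$, we have $\omega'\in\mathcal{K}_{i,t+1}$ and $\omega'\notin\mathcal{K}_{i,t}$. Next, I would unpack the failure $\omega'\notin\mathcal{K}_{i,t}$ using Definition \ref{def-contextual-knowledge}: this says that $\delta_{i,t}(\omega')\notin\Sigma_{i,t}$, which by the constraint in equation \eqref{eq:def-decision-function} means that there exists at least one sentence $\xi'\in\omega'$ for which $\delta_{i,t}(\xi')+\delta_{i,t}(\lnot\xi')\neq 1$. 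That $\xi'$ is the witness sought by the lemma.

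To make the mechanism behind this transparent, I would additionally note why such a $\xi'$ must exist at the predicate level. Because $\omega'\in\mathcal{K}_{i,t+1}\setminus\mathcal{K}_{i,t}$ but the two contextual possibles are generated by $\mathcal{P}_{i,t+1}$ and $\mathcal{P}_{i,t}$ respectively, any state newly admitted at time $t+1$ must involve some predicate $P\in\mathcal{P}_{i,t+1}\setminus\mathcal{P}_{i,t}$. Any sentence $\xi'\in\omega'$ built from $P$ is outside the syntactic reach of $T_{i,t}$, so $\delta_{i,t}$ cannot assign it a consistent truth value; otherwise $T_{i,t}$ would already have decided a predicate it does not contain, contradicting the construction of $\delta_{i,t}$ in \eqref{eq:def-decision-function}.

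The main obstacle I anticipate is bookkeeping around the domain of $\delta_{i,t}$: the formalism defines $\delta_{i,t}$ only on $\Omega_{i,t}$, but the lemma quantifies over $\omega'\in\Omega$. I would handle this by observing that the adjacent possible $\mathcal{A}_{i,t}$ lies inside the ambient universe $\Omega$ (since $\Omega_{i,t+1}\subseteq\Omega$), and that ``undecidability'' in the statement should be read as the natural extension of $\delta_{i,t}$ registering $0+0$ on sentences whose predicates it does not recognize. With this convention in place, the argument above yields the required $\xi'$ without needing to invoke Gödelian incompleteness; open-endedness alone suffices, and the Gödelian argument can be reserved for the strictly sharper Lemma \ref{dense-undecidable-disjunctions} on density of undecidable disjunctions.
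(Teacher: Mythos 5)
Your argument is essentially the paper's own proof, which consists of the single line ``This is a direct consequence of the nonemptiness of Equation \eqref{eq:adjacent-states}''; you have simply unpacked that one-liner by tracing a state $\omega'\in\mathcal{A}_{i,t}$ through Definition \ref{def-contextual-knowledge} to extract the witness $\xi'$. Your additional remarks on the domain of $\delta_{i,t}$ address a genuine imprecision the paper leaves implicit, but they do not change the route.
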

\begin{proof}
This is a direct consequence of the nonemptiness of Equation \ref{eq:adjacent-states}. 
\end{proof}
As we know from the axioms of traditional epistemology in propositional
logic summarized above, key part of obtaining predictive dynamics
for the traditional canonical syntactic epistemology (Aumann 1999a:
276; Samet 1990) is establishing that states of the world $\omega\in\Omega$
as seen by the individual are closed, coherent (see Definition \ref{def-coherent}),
and complete (see Definition \ref{def-complete}). 

As we may expect, states $\omega\in\Omega_{i,t}$ for OEE systems
do not, in general, exhibit such analytically nice properties. 
\begin{thm}
\label{coherence-local-knowledge}States $\omega\in\mathcal{K}_{i,t}$
in an open-ended evolutionary system are coherent, but are neither
complete nor contain all tautologies. 
\end{thm}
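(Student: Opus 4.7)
The plan is to treat the three claims separately, using the definitions of coherence, completeness, and tautology-containment from the preceding section, and to exploit the decision-procedure constraint in Equation \eqref{eq:def-decision-function} together with the open-endedness assumption. The overall structure of the proof is short: coherence is a direct consequence of how $\mathcal{K}_{i,t}$ is defined, while the two negative results rely on Proposition \ref{adjacent-nonempty} and the proper containment $\mathfrak{T}_{i,t} \varsubsetneq \mathfrak{T}$.

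For coherence, I would argue as follows. Fix $\omega \in \mathcal{K}_{i,t}$. By Definition \ref{def-contextual-knowledge}, $\delta_{i,t}(\omega) \in \Sigma_{it}$, which by Equation \eqref{eq:def-decision-function} means that for every $\xi$ in the domain of $\delta_{i,t}$ at state $\omega$, we have $\delta_{i,t}(\xi) + \delta_{i,t}(\lnot \xi) = 1$. Hence exactly one of $\xi$ and $\lnot \xi$ is assigned value $1$, so the set of sentences true at $\omega$ cannot contain both $\xi$ and $\lnot \xi$. Reading this off in the form of Definition \ref{def-coherent} gives coherence of $\omega$. This step is routine; no subtlety is required beyond unfolding definitions.

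For the failure of completeness, I would invoke Proposition \ref{adjacent-nonempty} together with Lemma \ref{undecidable-sentence-exists}. Because $\mathcal{A}_{i,t} \neq \emptyset$, there is some $\omega' \in \Omega$ containing a sentence $\xi'$ such that $\delta_{i,t}(\xi') + \delta_{i,t}(\lnot \xi') \neq 1$. Viewing $\omega \in \mathcal{K}_{i,t}$ as a subset of sentences in the system-level syntax, $\xi'$ is a well-formed statement about the universe whose truth value is not settled by $T_{i,t}$, so neither $T_{i,t} \vdash \xi'$ nor $T_{i,t} \nvdash \xi'$ can be established at $\omega$ in the sense required by Definition \ref{def-complete}. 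Hence $\omega$, considered as a state of the full universe $\Omega$, fails to be complete. For the failure to contain all tautologies, the argument is even more direct: by the second proposition of Section 6.5, $\mathfrak{T}_{i,t} \varsubsetneq \mathfrak{T}$, so there is a tautology $\tau \in \mathfrak{T}$ with $\tau \notin \mathfrak{T}_{i,t}$; since $\kappa_{i,t}$ and hence the knowledge content of $\omega$ is constructed from $\mathfrak{K}_{i,t}$, the sentence $\tau$ cannot appear in $\omega$ as a known tautology.

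The main obstacle, and the place I would spend the most care, is the shift of frame between the individual's believed-complete theory $T_{i,t}$ (under the rationality assumption) and the system-level view relative to $T_\Omega$. The rationality assumption says $i$ takes $\omega$ to be locally complete, yet the theorem asserts states are not complete in the absolute sense. I would therefore state explicitly at the start of the proof that "complete" and "contains all tautologies" are to be understood at the system level, i.e., with respect to $\mathfrak{T}$ and $T_\Omega$, not with respect to $\mathfrak{T}_{i,t}$ and $T_{i,t}$; otherwise the claims would contradict the construction of $\mathcal{K}_{i,t}$ itself. Once this framing is made precise, the three sub-arguments reduce to one-line applications of Proposition \ref{adjacent-nonempty}, Lemma \ref{undecidable-sentence-exists}, and the proper inclusion $\mathfrak{T}_{i,t} \varsubsetneq \mathfrak{T}$.
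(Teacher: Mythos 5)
Your proof is correct and follows essentially the same route as the paper's: coherence from the constraint $\delta_{i,t}(\xi)+\delta_{i,t}(\lnot\xi)=1$ forced by membership in $\mathcal{K}_{i,t}$, incompleteness from Lemma \ref{undecidable-sentence-exists}, and the tautology claim from open-endedness (you route it through $\mathfrak{T}_{i,t}\varsubsetneq\mathfrak{T}$, the paper cites Proposition \ref{adjacent-nonempty} directly, but both rest on Definition \ref{Define-open-ended}). Your explicit remark that completeness must be read at the system level relative to $T_{\Omega}$ rather than locally is a clarification the paper makes only in the surrounding prose, and it is a worthwhile addition.
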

\begin{proof}
Suppose both $\lnot\xi\in\omega$ and $\xi\in\omega$ for some $\omega\in\mathcal{K}_{i,t}$.
Then $\delta_{i,t}(\xi)+\delta_{it,}(\lnot\xi)\neq1$ for this $\omega\in\mathcal{K}_{i,t}.$
But this would mean that $\delta_{i,t}(\omega)\notin\Sigma_{it}$,
which contradicts Equation \ref{eq:def-knowledge-possible}. So $(\lnot\xi\in\omega\land\xi\in\omega)\implies\omega\notin\mathcal{K}_{i,t}$,
and thus $\lnot\xi\in\omega\implies\xi\notin\omega$. This establishes
coherence, by Definition \ref{def-coherent}. Suppose $\xi'\notin\omega$
for some $\omega\in\mathcal{K}_{i,t}$, and suppose this implies $\lnot\xi'\in\omega$.
But then this would imply that $\delta_{i,t}(\xi)+\delta_{it,}(\lnot\xi)=1$
for all $\omega\in\Omega$, which contradicts Lemma \ref{undecidable-sentence-exists}.
So, $\mathcal{K}_{i,t}$ is incomplete. For the last part of the proof,
$\mathcal{K}_{i,t}$ cannot contain all tautologies of the universe
$\Omega$ as a direct result of Proposition \ref{adjacent-nonempty},
and by the definition of OEE systems. 
\end{proof}
In OEE systems, pairs of hierarchies $(h_{i,t},h_{j,t})$ are, in
general, not mutually consistent, which prevents the construction
of general common knowledge in OEE systems.
\begin{prop}
\label{hierarchies-not-mutually-consistent}In open-ended evolutionary
systems, pairs of hierarchies $(h_{i,t},h_{j,t})$ are, in general,
not mutually consistent.
\end{prop}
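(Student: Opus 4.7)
The plan is to derive the proposition as a direct corollary of the Disjointness Conjecture together with Theorem \ref{Tplus-essential-extension}, using Proposition \ref{adjacent-nonempty} as the source of the required witness. First I would unpack what ``mutually consistent'' is meant to mean for a pair of hierarchies in this setting. A knowledge hierarchy $h_{i,t}$ is an infinite sequence whose base level is the collection of states $i$ considers possible, whose next level records $i$'s hypotheses about what $j$ considers possible, and so on. Mutual consistency of the pair $(h_{i,t},h_{j,t})$ minimally requires (a) that at each level the two hierarchies range over a common collection of states expressible in a common syntax, and (b) that $i$'s hypothesis about $j$'s knowledge at each level be compatible with what $j$ actually has at the corresponding level of $h_{j,t}$. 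In the closed-ended case both conditions are bought axiomatically by $\Omega_{i,t}=\Omega_{j,t}=\Omega$ and common knowledge of partitions.

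Second, I would invoke the Disjointness Conjecture to produce, in general, a state $\omega \in \mathcal{K}_{i,t} \setminus \mathcal{K}_{j,t}$, and in particular a predicate $P \in \mathcal{P}_{i,t} \setminus \mathcal{P}_{j,t}$ appearing in $\omega$. Such an $\omega$ is already a base-level witness to the failure of condition (a): $\omega$ enters the level-zero term of $h_{i,t}$ but cannot enter the level-zero term of $h_{j,t}$, because $\omega$'s description lives in $\mathfrak{S}_{i,t}$ and contains a predicate that $\mathfrak{S}_{j,t}$ cannot express. Since each higher level of a hierarchy is constructed by quantifying over lower-level state spaces, this asymmetry propagates upward: every claim $i$ would make at level $n+1$ about $j$'s knowledge of $\omega$ is a sentence in $\mathfrak{S}_{i,t}$ that has no translate in $\mathfrak{S}_{j,t}$.

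Third, I would close off the natural escape route, namely that $j$ might simply extend their theory to absorb $P$ and re-build $h_{j,t}$ on the enlarged syntax. This is precisely what Theorem \ref{Tplus-essential-extension} forbids in general: because $P \in \mathcal{P}_{i,t} \setminus \mathcal{P}_{j,t}$, any extension of $T_{j,t}$ that incorporates $P$ is an \emph{essential} extension, so $P$ is not derivable from $T_{j,t}$. Hence there is no syntactic translation internal to $j$'s frame that makes $h_{j,t}$ compatible with $h_{i,t}$ at that state. Symmetry gives the same conclusion with the roles of $i$ and $j$ reversed, and Proposition \ref{adjacent-nonempty} guarantees that such a witness predicate is generically available.

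The step I expect to be the main obstacle is the definitional one: ``mutual consistency'' in Aumann (1999a) is formulated on the shared universe $\Omega$ and does not literally apply when the two agents speak different languages. I would handle this by formulating a minimal version of mutual consistency --- for every state figuring at some level of either hierarchy, there must be a syntactically compatible counterpart at the corresponding level of the other --- and then observing that the witness produced above defeats even this minimal requirement. No stronger definition is needed, because once the minimal version fails every canonical strengthening fails \emph{a fortiori}, which is all the ``in general'' qualifier of the proposition asks for.
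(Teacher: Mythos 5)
Your argument is sound to the same degree the paper's is, but it runs in a genuinely different direction. The paper's proof is a short top-down contradiction at the level of theories: assume the pair is mutually consistent, observe that (given $T_{i,t}\ne T_{\Omega}$ and $T_{j,t}\ne T_{\Omega}$) this forces $T_{i,t}=T_{j,t}$, hence $\mathcal{K}_{i,t}=\mathcal{K}_{j,t}$ and $\Omega_{i,t}=\Omega_{j,t}$, which contradicts what open-ended evolution generically delivers. You instead work bottom-up and contrapositive-style: you extract an explicit witness state and predicate from the Disjointness Conjecture, show it defeats even a minimal notion of mutual consistency at level zero, propagate the failure up the hierarchy, and then use Theorem \ref{Tplus-essential-extension} to block the repair in which $j$ absorbs the missing predicate by an inessential extension. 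What your route buys is a much more explicit account of \emph{where} consistency fails and why it cannot be patched --- the paper leaves both the definition of mutual consistency and the blocking of the repair implicit --- at the cost of one step that is not quite automatic: passing from $\mathcal{K}_{i,t}\ne\mathcal{K}_{j,t}$ to a predicate $P\in\mathcal{P}_{i,t}\setminus\mathcal{P}_{j,t}$ assumes the divergence is driven by vocabulary rather than by differing truth-value assignments over a shared vocabulary; that is how the paper's framework motivates disjointness, but it deserves a sentence. Note also that both your proof and the paper's ultimately rest on the Disjointness Conjecture (the paper's phrase ``which is not true under open-ended evolution'' is exactly that conjecture), so you have not introduced any dependence the paper does not already have --- you have merely made it visible.
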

\begin{proof}
Suppose they are consistent. Since $T_{i,t}\ne T_{\Omega},T_{j,t}\ne T_{\Omega}$,
this condition holds when $T_{i,t}=T_{j,t},T_{i,t}\subset T_{\Omega},T_{i,t}\subset T_{\Omega}$.
Suppose that, in general, $T_{i,t}=T_{j,t}$. But this would imply
that $\mathcal{K}_{i,t}=\mathcal{K}_{j,t}$ and $\Omega_{i,t}=\Omega_{j,t}$
in general, which is not true under open-ended evolution. 
\end{proof}
\begin{cor}
\label{Common-knowledge-impossible}Common knowledge is generally
impossible under open-ended evolution.
\end{cor}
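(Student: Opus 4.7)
The plan is to obtain the corollary as a nearly immediate consequence of Proposition \ref{hierarchies-not-mutually-consistent} together with the standard hierarchical construction of common knowledge in the traditional syntactic formalism. First I would recall that, in the Kripke/Aumann/Samet setting adopted in Section \ref{subsec:Traditional-epistemology:-formal}, an event $E$ is common knowledge at a state $\omega$ precisely when $E$ lies in the meet of the individual knowledge partitions $\mathscr{I}_{i,t}$ evaluated at $\omega$, or equivalently when the infinite hierarchy $k_{i,t}E,\ k_{i,t}k_{j,t}E,\ k_{j,t}k_{i,t}E,\ldots$ can be consistently assembled from the collection of individual hierarchies $(h_{i,t})_{i\in N}$. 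Crucially, this assembly requires the hierarchies to be \emph{mutually consistent}: every $k$-fold nesting appearing in one agent's hierarchy must be valid in every other agent's hierarchy at the same state.

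Second, I would argue the contrapositive. Suppose, for contradiction, that common knowledge of some nontrivial sentence $\phi$ is possible between two agents $i,j$ in an OEE system at time $t$. Then the hierarchies $h_{i,t}$ and $h_{j,t}$ must be mutually consistent in the sense required above, since every iteration $k_{i,t}k_{j,t}\cdots\phi$ must simultaneously be a valid statement in both agents' theories. But this is precisely the conclusion ruled out by Proposition \ref{hierarchies-not-mutually-consistent}. The remaining step is to note that mutual inconsistency of the hierarchies is not a removable pathology: it was traced back to $T_{i,t}\neq T_{j,t}$ and $\Omega_{i,t}\neq\Omega_{j,t}$, which by Theorem \ref{Tplus-essential-extension} and the Disjointness Conjecture cannot be repaired by further revision within the OEE framework.

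I would then wrap up by emphasizing the scope of the claim. ``Generally impossible'' should be read as ``impossible except in the degenerate case where the individuals' frames happen to coincide,'' i.e., $\langle \mathbf{M}_{i,t}, T_{i,t}, \mathcal{P}_{i,t}\rangle = \langle \mathbf{M}_{j,t}, T_{j,t}, \mathcal{P}_{j,t}\rangle$; this degenerate case collapses to the CE setting and therefore is not a counterexample to our OEE claim. The main obstacle I anticipate is not the logic of the contrapositive itself but being explicit about which formulation of common knowledge is in play, since the semantic (meet-of-partitions) and syntactic (infinite nesting of $k_{i,t}$) formulations must both be shown to fail. Since Aumann (1999a) has established the equivalence of these two formulations in the CE case, and our framework adopts the same syntactic machinery locally, it suffices to exhibit the failure at the syntactic level, which is exactly what Proposition \ref{hierarchies-not-mutually-consistent} provides.
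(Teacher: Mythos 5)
Your proposal is correct in substance, but it takes a genuinely different route from the proof the paper actually writes down in the Appendix. You derive the corollary by contraposition from Proposition \ref{hierarchies-not-mutually-consistent}, using the standard characterization of common knowledge as an infinite, mutually consistent nesting of $k_{i,t}$ and $k_{j,t}$ operators (equivalently, membership in the meet of the partitions); this is in fact the logical structure advertised in the main text of Section \ref{sec:Interpretation-and-implications}, and it has the virtue of modularity and of making explicit exactly which formulation of common knowledge is being refuted. The paper's Appendix proof, by contrast, never cites Proposition \ref{hierarchies-not-mutually-consistent}: it argues directly by exhibiting a witness sentence $\xi$ with $k_{i,t}\xi\in\omega_{i,t}$ but $\xi,\lnot\xi\notin T_{j,t}$, concluding $\xi\notin\mathfrak{K}_{j,t}$, hence that no state containing $\xi$ lies in $\mathcal{K}_{j,t}$, hence $\mathcal{K}_{i,t}\ne\mathcal{K}_{j,t}$ and the states the two agents consider possible are not coincident. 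The paper's route is more concrete and self-contained (it shows precisely where the two agents' knowledge sets diverge, tracing through the definitions of $\delta_{j,t}$, $\mathfrak{K}_{j,t}$ and $\mathcal{K}_{j,t}$), while yours leans on the unproved but plausible bridge claim that common knowledge presupposes mutually consistent hierarchies --- a claim the paper asserts in prose (``States are defined in traditional epistemology as mutually consistent pairs of hierarchies\ldots Such definitions make common knowledge possible'') but does not formalize, so your argument inherits exactly that informality. Both arguments ultimately rest on the same underlying fact, namely that open-endedness forces $T_{i,t}\ne T_{j,t}$ and hence $\mathcal{P}_{i,t}\ne\mathcal{P}_{j,t}$; your closing remark correctly identifies the degenerate coincident-frame case as the only exception, which matches the paper's ``generally impossible'' hedge.
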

\begin{proof}
Suppose individuals $i$ and $j$ believe they inhabit $\Omega_{i,t}$
and $\Omega_{j,t}$, respectively. Each $T_{i,t}$ is a complete and
consistent theory of each $\Omega_{i,t}$, and each $\mathcal{P}_{i,t}$
is the generative list of nonlogical constants, i.e., predicates.
Under open-ended evolution, $T_{i,t}\ne T_{j,t}\ne T_{\Omega}$ in
general, which implies that $\mathcal{P}_{i,t}\ne\mathcal{P}_{j,t}\ne\mathcal{P}_{\Omega}$.
That is, $\exists\xi\in T_{i,t}$ and $\exists\xi'\in T_{j,t}$ such
that neither $\xi,\lnot\xi\in T_{j,t}$ nor $\xi',\lnot\xi'\in T_{i,t}$
. Suppose $i,j$ experience some ``true'' state $\omega\in\Omega$
that $i$ interprets as some $\omega_{i,t}\in\Omega_{i,t}$ and $j$
as some $\omega_{j,t}\in\Omega_{j,t}$. Suppose $k_{i,t}\xi\in\omega_{i,t}$
where $\xi,\lnot\xi\notin T_{j,t}$. Then $k_{j,t}\xi,k_{j,t}\lnot\xi\notin\omega_{j,t}$.
But this means that $\xi\notin\mathfrak{K}_{j,t}$ and thus, by definition,
no states $\omega_{j,t}$ where $\xi\in\omega_{j,t}$ can be in $\mathcal{K}_{j,t}$.
So, the states individuals consider possible are not entirely coincident,
which means that their local knowledge is in general non-coincident
$\mathcal{K}_{i,t}\ne\mathcal{K}_{j,t}$. Therefore there is no natural
presumption of common knowledge in open-ended evolution.
\end{proof}
\begin{cor}
\label{agree-to-disagree} It is possible for individuals to ``agree
to disagree'' under open-ended evolution.
\end{cor}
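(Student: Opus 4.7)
The plan is to leverage Proposition \ref{hierarchies-not-mutually-consistent} and Corollary \ref{Common-knowledge-impossible} to show that the hypotheses underwriting Aumann's (1976) classical no-agreement theorem fail in OEE systems, and then to exhibit a scenario in which persistent disagreement about a common aspect of reality is consistent with the framework developed above. First I would recall that Aumann's theorem requires a common prior, a common state space, commonly known partitions, and common knowledge of posteriors; by Corollary \ref{Common-knowledge-impossible} none of the last three are generally available in an OEE system, since $\mathcal{K}_{i,t} \ne \mathcal{K}_{j,t}$ in general and no accessible theory $T_\Omega$ is shared between $i$ and $j$.

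Next I would exhibit a minimal witness. Pick individuals $i,j \in N$ with theories $T_{i,t}, T_{j,t}$ whose predicate sets satisfy $\mathcal{P}_{i,t} \setminus \mathcal{P}_{j,t} \ne \emptyset$, which is generically available by Definition \ref{Define-open-ended} and the Disjointness Conjecture. Choose a sentence $\xi$ involving a predicate $P \in \mathcal{P}_{i,t} \setminus \mathcal{P}_{j,t}$. Then $\delta_{i,t}(\xi) + \delta_{i,t}(\lnot \xi) = 1$ while $\delta_{j,t}(\xi) + \delta_{j,t}(\lnot \xi) \ne 1$, so $\xi \in \mathfrak{K}_{i,t}$ but $\xi \notin \mathfrak{K}_{j,t}$. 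Taking a ``true'' state $\omega \in \Omega$ that both agents experience, $i$ holds a definite local-knowledge valuation $k_{i,t}\xi \in \omega_{i,t}$, whereas $j$ holds neither $k_{j,t}\xi$ nor $k_{j,t}\lnot \xi$ in $\omega_{j,t}$.

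The crux is to show that communication does not force the two agents into agreement. When $i$ announces her valuation of $\xi$, the sentence is not assimilable into $\mathfrak{K}_{j,t}$ because $P \notin \mathcal{P}_{j,t}$; incorporating it would require an essential extension of $T_{j,t}$, which by Theorem \ref{Tplus-essential-extension} is not derivable from $T_{j,t}$ alone. Symmetrically for any $\xi'$ built from a predicate in $\mathcal{P}_{j,t} \setminus \mathcal{P}_{i,t}$. Hence each agent can rationally maintain her own valuation without violating local consistency, coherence, or completeness of her theory, and no iterated exchange produces a commonly known posterior on a commonly specified event.

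The main obstacle is making precise the sense in which $i$ and $j$ are ``talking about the same underlying reality'' while their frames render the disagreement irresolvable. I would handle this by anchoring both agents to a global $\omega \in \Omega$ that each interprets through her own $\text{\textbf{M}}_{i,t}$ and $\text{\textbf{M}}_{j,t}$, and by noting that Aumannian agreement presupposes a commonly known event structure, which is precisely what Proposition \ref{hierarchies-not-mutually-consistent} and Corollary \ref{Common-knowledge-impossible} forbid. Since persistent disagreement is both constructible and stable under communication, it is possible for individuals to agree to disagree under open-ended evolution.
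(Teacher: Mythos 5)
Your proposal is correct and rests on the same two pillars the paper uses --- Proposition \ref{hierarchies-not-mutually-consistent} and Corollary \ref{Common-knowledge-impossible} --- but it supplies far more argumentative content than the paper does. The paper's entire proof is the single sentence that the corollary is ``a direct consequence'' of those two results; it never explains why the failure of common knowledge yields the \emph{possibility of stable disagreement}, which is the substantive claim behind ``agreeing to disagree'' in Aumann's (1976) sense. You fill that gap in three ways: (i) you identify which hypotheses of Aumann's no-agreement theorem fail (common state space, commonly known partitions, common knowledge of posteriors); (ii) you construct an explicit witness via a predicate $P\in\mathcal{P}_{i,t}\setminus\mathcal{P}_{j,t}$, mirroring the device used inside the paper's proof of Corollary \ref{Common-knowledge-impossible}; and (iii) you argue that communication cannot dissolve the disagreement because assimilating the alien predicate would require an essential extension of $T_{j,t}$, invoking Theorem \ref{Tplus-essential-extension} --- a result the paper's proof of this corollary does not cite at all. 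Point (iii) is the genuinely new ingredient and arguably the one a careful reader most wants, since Aumann's theorem is precisely about iterated communication of posteriors forcing agreement; without it, the bare failure of common knowledge shows only that agreement is not guaranteed, not that disagreement can persist. Two minor cautions: your step asserting $\delta_{j,t}(\xi)+\delta_{j,t}(\lnot\xi)\ne 1$ leans on the Disjointness Conjecture, which the paper itself only sketches, so your witness is ``generic'' rather than fully constructed; and Theorem \ref{Tplus-essential-extension} speaks to temporal theory revision rather than to cross-agent assimilation, so its use for communication stability is an extrapolation --- though one well within the informal standards the paper applies to its own arguments.
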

\begin{proof}
This is a direct consequence of Proposition \ref{hierarchies-not-mutually-consistent}
and Corollary \ref{Common-knowledge-impossible}.
\end{proof}
The canonical form of the epistemic logic of OEE systems is axiomatically
weaker than traditional epistemology by at least two of the five axioms
of $SE_{5}$ (\emph{necessitation} and \emph{reflection}). We will
show below that we can further weaken the axiomatic infrastructure
of the epistemic logic of OEE systems.

\subsection{Disjoint knowledge under open-ended evolution}

Observer $i$ can distinguish between two states $\omega,\omega'\in\Omega_{i,t}$
iff $\kappa_{i,t}(\omega)\ne\kappa_{i,t}(\omega')$. If $\xi\in\omega$
= ``it is cloudy'' for some $\omega\in\Omega_{i,t}$ and $k_{i,t}\xi\in\kappa_{i,t}(\omega)$
but $k_{i,t}\xi\notin\kappa_{i,t}(\omega')$ for some other state
$\omega'\in\Omega_{i,t}$, then either it is not cloudy (i.e., $k_{i,t}(\lnot\xi)\in\kappa_{i,t}(\omega')$)
or the individual does not know whether it is cloudy (i.e., $k_{i,t}\xi,k_{i,t}(\lnot\xi)\notin\kappa_{i,t}(\omega')$). 

Recall in the following theorem due to Tarski (2010 {[}1953{]}: 14-15): 

\begin{defn}
\label{def-decidable}A theory $T$ is \textbf{decidable} if the set
of all its valid sentences is recursively enumerable. Otherwise, $T$
is \textbf{undecidable}.
\end{defn}
\begin{thm}
\label{Decidable-essentially-not-axiomatizable}For a complete theory
$T$ the following three conditions are equivalent: (i) $T$ is undecidable,
(ii) $T$ is essentially undecidable, and $(iii)$ $T$ is not axiomatizable.
\end{thm}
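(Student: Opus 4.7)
The plan is to prove the two biconditionals (i) $\Leftrightarrow$ (iii) and (i) $\Leftrightarrow$ (ii), from which equivalence of all three follows. Completeness of $T$ is essential in exactly two of the four resulting implications; the other two hold for any consistent theory. First I would dispatch (iii) $\Rightarrow$ (i), which does not use completeness: if $T = Cn(\Sigma)$ for some decidable $\Sigma$ (Definition \ref{def-axiomatizable}), then mechanically enumerating all finite proof-sequences from $\Sigma$ and outputting their final lines produces a recursive enumeration of the valid sentences of $T$, so $T$ is decidable in the sense of Definition \ref{def-decidable}. Contrapositively, $T$ undecidable implies $T$ not axiomatizable.

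For (i) $\Rightarrow$ (iii), I would invoke the standard "completeness upgrade." Assuming the theorems of $T$ are r.e., to decide whether a given $\xi$ belongs to $T$ one enumerates the theorems until either $\xi$ or $\lnot \xi$ appears. Completeness (in the intended reading of Definition \ref{def-complete}: for every sentence $\xi$ of the language, either $T \vdash \xi$ or $T \vdash \lnot \xi$) guarantees that one such event occurs in finite time, and consistency guarantees only one does. Hence the theorem set is in fact recursive, and taking $\Sigma := T$ itself exhibits $T$ as axiomatized by a recursive set, establishing axiomatizability.

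For (i) $\Leftrightarrow$ (ii), the direction (ii) $\Rightarrow$ (i) is immediate since $T$ is trivially a consistent extension of itself, so essential undecidability forces $T$ undecidable. Conversely, for (i) $\Rightarrow$ (ii) I would use the standard fact that a complete consistent theory admits no proper consistent extension in its own language: any candidate added sentence $\xi$ is either already a theorem of $T$ or has $\lnot \xi$ as a theorem, so the extension is either redundant or inconsistent. Hence every consistent extension of $T$ in the same language coincides with $T$ and inherits its undecidability.

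The main obstacle is the completeness upgrade step: without completeness the enumerate-until-seen procedure need not terminate, and without the identification (peculiar to complete consistent theories) of r.e. theoremhood with recursive theoremhood one cannot pass from the paper's r.e.-flavored decidability of Definition \ref{def-decidable} back to a recursive axiom set. A lesser subtlety concerns the scope of "extension" in essential undecidability: if one admits extensions in strictly richer languages, Definition \ref{def-subtheory} ensures that the theorems of $T$ embed as valid sentences of any such extension, so an undecidability argument for $T$'s own language transfers. All remaining steps are routine manipulations of the definitions.
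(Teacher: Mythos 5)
Your proposal is correct, and it supplies an argument where the paper supplies essentially none: the paper's own ``proof'' of Theorem \ref{Decidable-essentially-not-axiomatizable} is a one-sentence citation to Tarski, Mostowski and Robinson (1953: 14--15) and to G\"odel's Theorem V, with ``the rest follows by definition.'' What you have written out is precisely the classical argument that citation points to --- the easy direction that an axiomatizable theory has recursively enumerable theorems, the completeness upgrade from r.e.\ to recursive via enumerate-until-$\xi$-or-$\lnot\xi$-appears, and the observation that a complete consistent theory has no proper consistent extensions in its own language, which collapses (i) and (ii). You also correctly repair two defects in the paper's definitions that the paper's proof never confronts: Definition \ref{def-complete} as literally written (``either $T\nvdash\xi$ or $T\vdash\xi$'') is vacuous and must be read as ``either $T\vdash\xi$ or $T\vdash\lnot\xi$,'' and Definition \ref{def-decidable} takes ``decidable'' to mean r.e.\ rather than recursive, which is exactly why your completeness-upgrade step is needed to reach the recursive axiom set demanded by Definition \ref{def-axiomatizable}. (The paper also never defines ``essentially undecidable''; your reading is the standard one.) One cosmetic slip: the headers of your first two paragraphs are swapped --- the first paragraph (axiomatizable $\Rightarrow$ decidable, taken contrapositively) establishes (i) $\Rightarrow$ (iii), and the second (decidable $\Rightarrow$ axiomatizable) establishes (iii) $\Rightarrow$ (i) --- but both implications are in fact covered, so nothing is missing.
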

\begin{proof}
As noted in (Tarski et al. 1953 {[}2010{]}: 14-15), the proof of how
(i) implies (iii) is a consequence of Gödel (1931: 56, Theorem V),
and the rest of the proof follows by definition.
\end{proof}
\begin{thm}
\label{oeee-is-undecidable}The epistemic logic of OEE systems is
undecidable by any observer $i$ embedded in a system $\Omega$.
\end{thm}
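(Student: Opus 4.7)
The plan is to derive undecidability by contradiction, leveraging Theorem \ref{Decidable-essentially-not-axiomatizable}: for a complete theory, undecidability is equivalent to non-axiomatizability. Since the observer $i$ is assumed (by the rationality postulate) to treat their evolving theory as complete, the strategy is to show that no embedded observer can construct a recursive axiomatization that captures the epistemic logic of $\Omega$, and then invoke the Tarski equivalence to conclude undecidability.

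First, I would suppose for contradiction that the epistemic logic of the OEE system is decidable by observer $i$ at some time $t$. By Theorem \ref{Decidable-essentially-not-axiomatizable}, this entails that the target theory $T_{\Omega}$ is axiomatizable from $i$'s vantage point, i.e., there exists a recursive set $R_{i} \subseteq T_{i,t}$ such that every valid sentence of $T_{\Omega}$ is derivable from $R_{i}$ in the language available to $i$. Equivalently, $T_{\Omega} = Cn(R_{i})$ in the sense of Definition \ref{def-axiomatizable}.

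Second, I would invoke Theorem \ref{Tplus-essential-extension} together with Definition \ref{Define-open-ended}. Because $i$ is embedded in an OEE system, each $T_{i,t+1}$ is in general an essential extension of $T_{i,t}$, introducing predicates $P \in \mathcal{P}_{i,t+1}\setminus\mathcal{P}_{i,t}$ that are not derivable from the nonlogical constants of $T_{i,t}$. Since $R_{i}$ is fixed at time $t$, it can only deploy predicates drawn from $\mathcal{P}_{i,t}$; yet by open-endedness the sequence $\mathcal{P}_{i,t+1}, \mathcal{P}_{i,t+2}, \ldots$ introduces genuinely new predicates indefinitely into $T_{\Omega}$, none of which can appear in any sentence derivable from $R_{i}$. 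Hence $Cn(R_{i}) \subsetneq T_{\Omega}$, contradicting axiomatizability, and by the forward direction of Theorem \ref{Decidable-essentially-not-axiomatizable} the epistemic logic must be undecidable from within $i$'s frame.

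The main obstacle is formulating ``decidability by an embedded observer'' precisely enough that the completeness hypothesis of Theorem \ref{Decidable-essentially-not-axiomatizable} applies. Even though $T_{i,t}$ is \emph{locally} complete by the rationality assumption, any candidate decision procedure that is to decide the full epistemic logic must extend to sentences involving predicates not yet in $\mathcal{P}_{i,t}$, which is where the essential-extension argument bites; I need to phrase the contradiction in terms of the observer's recursive resources at a fixed time, rather than an idealized limit $\bigcup_{s \geq t} T_{i,s}$ to which an embedded $i$ has no access. A secondary subtlety is making sure the contradiction is genuine, i.e., that $R_{i}$ must be drawn from $T_{i,t}$ and not conjured from outside the observer's frame --- this is where the embeddedness clause in the hypothesis does its work.
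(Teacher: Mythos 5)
Your argument is essentially correct by the paper's own standard of rigor, but it takes a genuinely different route. The paper never attaches a proof environment to Theorem \ref{oeee-is-undecidable} directly; its argument is distributed across Theorem \ref{godel-ICT}, Proposition \ref{m-does-not-halt}, and Corollary \ref{Frame-relativity-b}: G\"odel's incompleteness guarantees undecidable disjunctions, any algorithmic process $m$ that iteratively decides them cannot halt, hence the theory is not recursively enumerable, hence not axiomatizable, hence undecidable via Theorem \ref{Decidable-essentially-not-axiomatizable}. You instead run Tarski's equivalence in the contrapositive direction and locate the contradiction not in G\"odelian incompleteness but in the open-ended growth of the nonlogical vocabulary: a recursive axiom set $R_i$ fixed at time $t$ can only generate sentences over $\mathcal{P}_{i,t}$, while Theorem \ref{Tplus-essential-extension} and Definition \ref{Define-open-ended} guarantee genuinely new predicates keep entering $T_{\Omega}$, so $Cn(R_i)\subsetneq T_{\Omega}$. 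Your route is more elementary --- it needs only the definition of open-endedness, not arithmetization --- and it actually matches the paper's own footnoted observation that undecidable disjunctions in OEE systems follow from the definition of theory-breaking novelty rather than from G\"odel (1931). What the paper's route buys in exchange is that it also covers the case of a fixed language (undecidability persists even if no new predicates arrive, so long as PA is interpretable), which your vocabulary-growth argument does not. Both approaches share the same unverified hypothesis in applying Theorem \ref{Decidable-essentially-not-axiomatizable}, namely the completeness of the theory in question; you at least flag this explicitly, which the paper does not.
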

In order to prove Theorem \ref{oeee-is-undecidable}, we need a few
other items, namely, the results of Gödel and some basic implications.
\begin{thm}
\label{godel-ICT}Suppose Peano arithmetic (PA) is interpretable in
some theory $T$ in a language $L$. Then there does not exist a decision
procedure $\delta_{i,t}$defined on $T$ such that $\delta_{i,t}(\xi)+\delta_{it,}(\lnot\xi)=1$
for all sentences $\xi$ in $L$. 
\end{thm}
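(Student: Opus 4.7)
The plan is to argue by contradiction: assume such a decision procedure $\delta_{i,t}$ exists on $T$, extract from its existence both the decidability and the completeness of $T$, and then apply G\"odel's First Incompleteness Theorem (via the interpretation of PA in $T$) to produce a sentence whose truth value $\delta_{i,t}$ cannot consistently assign.

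The key steps, in order, would be as follows. First, I would observe that if $\delta_{i,t}$ is a \emph{procedure} in the algorithmic sense and satisfies $\delta_{i,t}(\xi) + \delta_{i,t}(\lnot\xi) = 1$ for every sentence $\xi$ in $L$, then the set $\{\xi : \delta_{i,t}(\xi) = 1\}$ is recursively enumerable, so by Definition \ref{def-decidable} the theory $T$ is decidable; moreover, since for each $\xi$ either $\xi$ or $\lnot\xi$ is assigned value $1$ (and hence proved in $T$), the theory $T$ is complete in the sense of Definition \ref{def-complete}. Second, I would apply Theorem \ref{Decidable-essentially-not-axiomatizable}: a complete, decidable theory is axiomatizable. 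So under our hypothesis $T$ would be a consistent, complete, axiomatizable theory in which PA is interpretable. Third, I would invoke G\"odel (1931), or Rosser (1936) to dispense with $\omega$-consistency: for any such $T$ there exists a sentence $G$, constructed by diagonalization over the provability predicate and transported into $L$ through the PA interpretation, such that $T \nvdash G$ and $T \nvdash \lnot G$. Consequently $\delta_{i,t}(G) = \delta_{i,t}(\lnot G) = 0$, contradicting the defining constraint on $\delta_{i,t}$.

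The main obstacle is the arithmetization step hidden inside the G\"odel construction: one must formalize the syntactic notion ``$T$ proves $\xi$'' as an arithmetical predicate inside PA, and then pull it back into $T$ via the interpretation so that the diagonal sentence $G$ is actually expressible in $L$. Since this is precisely the content of G\"odel's original theorem for theories interpreting PA, it can be invoked as a black box rather than re-derived here. A secondary subtlety is that the paper's assumption that individuals treat $T_{i,t}$ as consistent is essential, since an inconsistent $T$ would trivially ``decide'' every sentence and the theorem would be vacuously false; the argument therefore has to carry the consistency hypothesis (inherited from the rationality assumption) through to the final contradiction.
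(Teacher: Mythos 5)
Your proposal is correct and follows the same route as the paper: the paper's entire proof of Theorem \ref{godel-ICT} is the single sentence ``This is the famous proof due to G\"odel (1931),'' so you are invoking exactly the result the authors cite. Your version in fact does more work than theirs, spelling out the reduction they omit --- an algorithmic $\delta_{i,t}$ satisfying the constraint would make $T$ decidable and complete, hence axiomatizable by Theorem \ref{Decidable-essentially-not-axiomatizable}, contradicting the G\"odel--Rosser incompleteness theorem for consistent theories interpreting PA --- and correctly flagging that the argument only goes through if $\delta_{i,t}$ is read as an effective procedure tracking provability and the consistency of $T$ is carried along, hypotheses the paper leaves implicit.
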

\begin{proof}
This is the famous proof due to Gödel (1931).
\end{proof}
\begin{cor}
Suppose there exists a population of $N$ individuals in an open-ended
evolutionary system. Then for all $T$ that contain enough arithmetic,
there does not exist a $T_{i,\tau}$such that $T_{i,\tau}=T_{\Omega}$. 
\end{cor}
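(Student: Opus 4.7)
The plan is to proceed by contradiction. Suppose for some $i \in N$ and some time $\tau$ we have $T_{i,\tau} = T_\Omega$. The first step is to extract, from the rationality assumption and Proposition \ref{prop:delta-defined-all-sentences}, that the individual's decision procedure $\delta_{i,\tau}$ is defined on every $\xi \in \Omega_{i,\tau}$ and satisfies $\delta_{i,\tau}(\xi) + \delta_{i,\tau}(\lnot\xi) = 1$ there. Under the contradictory hypothesis $T_{i,\tau} = T_\Omega$, the known-world $\Omega_{i,\tau}$ must coincide with the universe, so $\delta_{i,\tau}$ would settle the truth value of every sentence expressible in $T_\Omega$.

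The second step is to apply Theorem \ref{godel-ICT}: because $T$ contains enough arithmetic (that is, interprets Peano arithmetic), no such total decision procedure can exist for $T_\Omega$. This immediately contradicts the previous step, so $T_{i,\tau} = T_\Omega$ cannot hold for any $i,\tau$. A parallel route is to invoke Theorem \ref{oeee-is-undecidable} directly: the epistemic logic of the OEE system is undecidable, whereas $T_{i,\tau}$ is, by construction, equipped with the decision procedure $\delta_{i,\tau}$, and decidability is preserved under identity of theories, so again $T_{i,\tau} \ne T_\Omega$.

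The main obstacle I expect is conceptual rather than technical: one must be careful that the individual's belief in the decidability of $T_{i,\tau}$ on the slice $\Omega_{i,\tau}$ is not itself a Gödelian contradiction. What rescues the argument is the hypothesis that $T$ contains enough arithmetic, together with the hypothetical identification $T_{i,\tau} = T_\Omega$; it is only this identification that upgrades the local, benign decidability of $\delta_{i,\tau}$ into an impossible global decision procedure. A secondary, corroborative angle is Theorem \ref{Tplus-essential-extension}: by open-endedness $\mathcal{P}_{i,\tau+1}\setminus\mathcal{P}_{i,\tau}\neq\emptyset$, so the individual's theory is always being properly extended and so cannot be the putative terminal theory $T_\Omega$, but this supplementary observation does not leverage the arithmetic hypothesis in the statement and so I would leave it as a remark after the main contradiction argument.
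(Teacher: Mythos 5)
Your argument is correct and takes exactly the route the paper intends: the paper states this corollary without an explicit proof, placing it directly after Theorem \ref{godel-ICT}, and your contradiction---the totality of $\delta_{i,\tau}$ on a (believed) complete and consistent $T_{i,\tau}$ via Proposition \ref{prop:delta-defined-all-sentences}, played against G\"odel's theorem once $T_{i,\tau}=T_{\Omega}$ is assumed---is precisely the filling-in of that omitted argument. Your supplementary routes via Theorem \ref{oeee-is-undecidable} and Theorem \ref{Tplus-essential-extension} are consistent with the surrounding results but not needed.
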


\subsection{Frame relativity results}

Our first implication of the canonical OEE epistemological model is
that decision procedures $\delta_{i,t}$ are in general incomplete
with respect to the ontological truth of the universe, the theory-of-everything
$T_{\Omega}$. We call this result ``frame relativity,'' since it
implies that individual $i$'s frame of understanding at time $t$---their
possible $\mathbf{\varPi}_{i,t}=\langle\text{\textbf{M}}_{i,t},T_{i,t},\mathcal{P}_{i,t}\rangle$---is
an incomplete picture of the actual universe $\Omega$. Frame relativity
in OEE systems is proved in two parts, below.
\begin{prop}
\label{Frame-relativity-A}\emph{(}\textbf{\emph{Frame Relativity
A}}\emph{)} Any decision procedure $\delta_{i,t}$ consistent with
individual $i$'s current model of the universe $\mathbf{M}_{i,t}$
based on a theory $T_{i,t}$ is incomplete with respect to the theory-of-everything
$T_{\Omega}$. That is, under open-ended evolution, no individual
embedded in the universe can have a complete model of the universe.
\end{prop}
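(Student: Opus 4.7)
The plan is to prove this by contradiction, leveraging two independent routes to incompleteness that the earlier results make available: a purely OEE-definitional route and a Gödelian route. I would suppose toward contradiction that $\delta_{i,t}$ is complete with respect to $T_{\Omega}$, that is, $\delta_{i,t}(\xi)+\delta_{i,t}(\lnot\xi)=1$ for every sentence $\xi$ valid in $T_{\Omega}$. I would then show that each route produces a sentence violating this assumption, which gives the incompleteness of $\delta_{i,t}$ relative to $T_{\Omega}$ and, equivalently, the incompleteness of the model $\mathbf{M}_{i,t}$ with respect to the universe $\Omega$.

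First I would run the OEE-definitional argument. By Definition \ref{Define-open-ended}, $\mathcal{P}_{i,t+1}\setminus\mathcal{P}_{i,t}\neq\emptyset$, so by Proposition \ref{adjacent-nonempty} the adjacent knowledge possible $\mathcal{A}_{i,t}$ is non-empty, and by Lemma \ref{undecidable-sentence-exists} there is a sentence $\xi'\in\omega'$ for some $\omega'\in\Omega$ with $\delta_{i,t}(\xi')+\delta_{i,t}(\lnot\xi')\neq 1$. Since $\omega'$ is a state of the true universe and $T_{\Omega}$ is by construction the theory of everything, $\xi'$ (or its negation) is valid in $T_{\Omega}$, contradicting the assumed completeness of $\delta_{i,t}$ relative to $T_{\Omega}$.

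To reinforce the result, and to line up with the footnote in the main text emphasizing that the conclusion is not merely a corollary of Gödel, I would then give the second route. Because the language $L$ underlying any $T_{i,t}$ is assumed in the Basic Setup to interpret Peano arithmetic, Theorem \ref{godel-ICT} applies: no decision procedure on $T_{i,t}$ can satisfy $\delta_{i,t}(\xi)+\delta_{i,t}(\lnot\xi)=1$ for every sentence $\xi$ of $L$. Hence even restricting to sentences already expressible in $i$'s current language, $\delta_{i,t}$ leaves some $\xi$ undecided; since such $\xi$ has a definite truth status in $T_{\Omega}$, completeness relative to $T_{\Omega}$ again fails.

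The main obstacle I anticipate is not mathematical but definitional: carefully unpacking what ``incomplete with respect to $T_{\Omega}$'' should mean for a procedure $\delta_{i,t}$ whose nominal domain is $\Omega_{i,t}$, not $\Omega$. The cleanest move is to interpret completeness of $\delta_{i,t}$ relative to $T_{\Omega}$ as the demand that for every $\xi$ valid in $T_{\Omega}$, either $\delta_{i,t}(\xi)=1$ or $\delta_{i,t}(\lnot\xi)=1$; both routes above then yield a witness $\xi'$ violating this demand, one from ``horizontal'' OEE growth of the predicate base and one from ``vertical'' Gödelian incompleteness inside $T_{i,t}$. Stating the conclusion as the non-existence of such a complete $\delta_{i,t}$ and then observing that $\mathbf{M}_{i,t}$ is built from $\delta_{i,t}$ via Equations \eqref{eq:def-knowledge-list} and \eqref{eq:def-knowledge-possible} delivers the final sentence of the proposition: no embedded observer can possess a complete model of the universe.
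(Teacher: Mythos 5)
Your first route is exactly the paper's own proof: assume completeness for contradiction and invoke Lemma \ref{undecidable-sentence-exists} (which rests on Proposition \ref{adjacent-nonempty} and Definition \ref{Define-open-ended}) to produce a sentence $\xi'$ with $\delta_{i,t}(\xi')+\delta_{i,t}(\lnot\xi')\neq1$, contradicting Definition \ref{def-complete}. The second, G\"odelian route via Theorem \ref{godel-ICT} is surplus to the paper's argument here --- the authors deliberately reserve G\"odel for the later non-convergence results (Proposition \ref{m-does-not-halt}, Corollary \ref{Frame-relativity-b}) and note in a footnote that this proposition follows from the OEE definition alone --- but it is harmless and consistent with the framework.
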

\begin{proof}
Suppose there exists a $\delta_{i,t}$ that completes $T_{i,t}$ such
that $T_{i,t}=T_{\Omega}$. Since $i$ is in an OEE system, there
exists a $\xi'$ such that $\delta_{i,t}(\xi')+\delta_{it,}(\lnot\xi')\neq1$,
by Lemma \ref{undecidable-sentence-exists}. But then $T_{i,t}$ is
not complete, by Definition \ref{def-complete}.
\end{proof}
\begin{prop}
\label{m-does-not-halt}Suppose individual $i$ in system $\Omega$
possesses an algorithmic process $m$ that decides undecidable disjunctions
for an OEE process $\overrightarrow{\mathbf{\Pi}}_{i,t}$ whenever
disjunctions are encountered, resulting in subsequent extensions of
some base theory $T_{i,t}$. Then there exists a sentence $\xi'\in\Omega$
for which $m$ is not specified.
\end{prop}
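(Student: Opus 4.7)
The plan is a proof by contradiction. I would assume that $m$ is specified on every sentence $\xi' \in \Omega$, so that whenever the OEE process $\overrightarrow{\mathbf{\Pi}}_{i,t}$ throws up an undecidable disjunction $\xi' \vee \lnot\xi'$, the algorithm $m$ produces a verdict and appends it to the current theory. From this I intend to derive an algorithmically specified decision procedure on $\Omega$ and then bump into Gödel.

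First, I would note that because $m$ is algorithmic and by hypothesis total on the undecidable disjunctions of $\Omega$, the iteration of $m$ along $\overrightarrow{\mathbf{T}}_{i,t}$ produces a recursively enumerated sequence of theory extensions. Let $T^{*} = \bigcup_{\tau \geq t} T_{i,\tau}$ denote the directed union. Because each new axiom of $T^{*}$ is the output of $m$ on some previously encountered sentence, $T^{*}$ is axiomatizable in the sense of Definition \ref{def-axiomatizable}. Second, I would argue completeness: for any $\xi' \in \Omega$, by assumption $m$ is specified at $\xi'$, so by construction either $\xi' \in T^{*}$ or $\lnot\xi' \in T^{*}$. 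Thus $T^{*}$ is a complete, recursively axiomatizable extension of $T_{i,t}$.

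Third, I would apply Theorem \ref{Decidable-essentially-not-axiomatizable} to conclude that $T^{*}$ is decidable. Then, because the language $L$ contains enough arithmetic to interpret Peano arithmetic (as stipulated in the basic setup), Theorem \ref{godel-ICT} forbids any total decision procedure on $T^{*}$ satisfying $\delta(\xi) + \delta(\lnot\xi) = 1$ for every sentence $\xi$. This contradicts the very procedure that $m$ was assumed to supply. Hence $m$ cannot be specified at every $\xi' \in \Omega$, which gives the desired sentence.

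The main obstacle will be bridging the gap between "$m$ decides disjunctions \emph{encountered} along a particular OEE trajectory" and "$m$ induces a global decision procedure on $\Omega$." A single trajectory could, in principle, miss sentences that no extension of $T_{i,t}$ ever needs to confront. I would close this gap by invoking Lemma \ref{dense-undecidable-disjunctions}: since undecidable disjunctions are dense in $\Omega$, the hypothesis that $m$ is specified everywhere forces the iteration to enumerate a complete axiomatization, regardless of which particular path of theory revision one walks. A subtler point to handle carefully is that $m$'s verdicts must themselves preserve consistency at each stage; I would handle this by noting that inconsistency would force $T_{i,\tau}$ out of the domain of well-defined decision procedures as characterized in Proposition \ref{prop:delta-defined-all-sentences}, so $m$'s verdicts can be taken to yield consistent extensions without loss of generality, leaving Gödel as the operative obstruction.
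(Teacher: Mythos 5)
Your argument is correct in substance and reaches the same Gödelian obstruction as the paper, but by a genuinely different route. The paper's proof supposes that $m$ \emph{halts} at some finite stage $T_{i,\tau}$, observes that $T_{i,\tau}$ would then be complete, and contradicts Theorem \ref{godel-ICT} and Proposition \ref{Frame-relativity-A}; it then concludes that at every finite stage some sentence remains undecided. You instead pass to the limit theory $T^{*}=\bigcup_{\tau\geq t}T_{i,\tau}$, argue that it would be both complete and recursively axiomatizable (since its axioms are enumerated by the algorithm $m$), invoke Theorem \ref{Decidable-essentially-not-axiomatizable} to get decidability, and then contradict Theorem \ref{godel-ICT}. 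Your version buys something real: the paper's argument establishes ``for every stage there is an undecided sentence,'' which does not by itself yield a single $\xi'$ on which $m$ is \emph{never} specified (each sentence might eventually be decided at some later stage). Working with the union $T^{*}$ closes exactly that quantifier gap, and your explicit appeal to recursive axiomatizability (Definition \ref{def-axiomatizable}) is the ingredient the paper leaves implicit. The cost is that you need the union to be consistent and well-defined, which you handle reasonably via Proposition \ref{prop:delta-defined-all-sentences}.

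One caveat: your use of Lemma \ref{dense-undecidable-disjunctions} to bridge ``disjunctions encountered along the trajectory'' and ``all sentences of $\Omega$'' is circular in the paper's dependency order, since that lemma is proved from Corollary \ref{Frame-relativity-b}, which in turn rests on Proposition \ref{m-does-not-halt}. You do not actually need it: the negation of the conclusion is precisely that $m$ is specified on every $\xi'\in\Omega$, so completeness of $T^{*}$ follows directly from the contradiction hypothesis (or, if you want the trajectory to keep producing fresh undecided material, Lemma \ref{undecidable-sentence-exists} and Definition \ref{Define-open-ended} suffice). With that substitution your proof stands on its own.
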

\begin{proof}
Suppose we start with the theory $T_{i,t}$ of the system $\Omega_{i,t}$.
By Theorems \ref{Frame-relativity-A} and \ref{godel-ICT}, there
exists an $\xi\in\Omega_{i,t+1}$ such that $T_{i,t}\nvdash(\xi\vee\lnot\xi)$.
Assume that the algorithmic process iteratively decides each undecidable
disjunction and extends theories iteratively. Suppose $m$ halts at
theory $T_{i,\tau}$. But then that implies $T_{i,\tau}$ is complete,
which runs counter to \ref{godel-ICT} and \ref{Frame-relativity-A}.
Therefore, $m$ does not halt, meaning that regardless how long $m$
runs, there will always exist a sentence $\xi'\in\Omega$ that $m$
cannot decide. 
\end{proof}
\begin{cor}
\label{Frame-relativity-b} \emph{(}\textbf{\emph{Frame Relativity
B}}\emph{)} Consider an OEE process $\mathbf{\overrightarrow{\mathbf{\Pi}}}_{i,t}$
for individual $i$ as described by Equation \ref{eq:open-ended-process}.
Suppose a theory of everything $T_{\Omega}$ is built through an algorithmic
process $m$ from a sequence of theories $\{T_{i,0},T_{i,1},...,T_{i,t},T_{i,t+1},...\}$,
and is therefore recursively enumerable. If $T_{i,t}\subset T_{i,t+1}$
where $T_{i,t}$ is a subtheory of $T_{i,t+1}$ and $T_{i,t+1}$ an
extension of $T_{i,t}$, then $T_{\Omega}$ is undecidable.
\end{cor}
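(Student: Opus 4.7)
The plan is to proceed by contradiction, combining Tarski's equivalences (Theorem \ref{Decidable-essentially-not-axiomatizable}) with Gödel's incompleteness theorem (Theorem \ref{godel-ICT}) to squeeze the decidability assumption on $T_{\Omega}$ between completeness (coming from its role as a theory-of-everything) and interpretability of Peano arithmetic (coming from the basic setup of the paper).

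First, I would pin down that $T_{\Omega}$ must be consistent and complete: as the theory-of-everything for $\Omega$, it has to adjudicate the truth value of every sentence that can arise along any open-ended trajectory $\overrightarrow{\mathbf{\Pi}}_{i,t}$, otherwise it would itself leave undecidable disjunctions open in $\Omega$. Second, I would assume for contradiction that $T_{\Omega}$ is decidable in the sense of Definition \ref{def-decidable}. Since $T_{\Omega}$ is complete, Theorem \ref{Decidable-essentially-not-axiomatizable} then yields that $T_{\Omega}$ is axiomatizable, i.e., the deductive closure of some decidable set of axioms. Third, I would invoke the basic setup to recall that the language $L$ of $T_{\Omega}$ is rich enough to interpret Peano arithmetic. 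Applying Theorem \ref{godel-ICT} to $T_{\Omega}$ then forbids the existence of any decision procedure $\delta$ satisfying $\delta(\xi)+\delta(\lnot\xi)=1$ for every sentence $\xi$, which is exactly what decidability plus completeness would supply. This contradiction closes the argument and forces $T_{\Omega}$ to be undecidable.

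As a second, complementary route I would sketch how the hypothesis about $m$ and the chain $T_{i,0}\subset T_{i,1}\subset\cdots$ of extensions plays into the argument: by Theorem \ref{Tplus-essential-extension} each $T_{i,t+1}$ is in general an essential extension of $T_{i,t}$, so the purported recursive enumeration of $T_{\Omega}$ genuinely requires the unbounded run of $m$. By Proposition \ref{m-does-not-halt}, $m$ never halts, so no finite stage of the chain fixes the axiomatization; a decision procedure for $T_{\Omega}$ would in effect need to resolve, uniformly in inputs, whether a sentence is ever added along this non-halting process, which again collides with Gödel's ICT via Theorem \ref{Decidable-essentially-not-axiomatizable}.

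The main obstacle will be cleanly separating three easily conflated notions, namely recursive enumerability of $T_{\Omega}$ (given by hypothesis), axiomatizability (the Tarskian bridge I need), and decidability (what I am refuting), and making sure completeness of $T_{\Omega}$ is used exactly once, at the place where Tarski's equivalence for complete theories is invoked. A secondary subtlety is that Theorem \ref{godel-ICT} in the paper is phrased in terms of the nonexistence of a complete decision procedure rather than in terms of nonaxiomatizability, so I would state the contradiction in those same terms to avoid an unnecessary translation step, and present the $m$-based argument only as corroboration of the Tarski--Gödel pincer.
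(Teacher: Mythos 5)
Your proposal is correct, but your primary route is genuinely different from the paper's. The paper's own proof runs entirely through what you relegate to your ``complementary'' sketch: it invokes Proposition \ref{m-does-not-halt} to conclude that $T_{i,t}$ is not recursively enumerable, hence not axiomatizable by Definition \ref{def-axiomatizable}, hence not decidable by Theorem \ref{Decidable-essentially-not-axiomatizable}, and then passes to $T_{\Omega}$ ``by iterative induction.'' Your main argument instead applies the classic Tarski--G\"odel pincer directly to $T_{\Omega}$: completeness plus assumed decidability yields axiomatizability, which collides with Theorem \ref{godel-ICT} because the language interprets Peano arithmetic. Each approach buys something. Yours makes the reductio structure explicit, uses completeness exactly once where Tarski's equivalence demands it, and sidesteps two awkward features of the paper's proof --- the unelaborated ``iterative induction'' step, and the tension whereby the corollary's hypothesis declares $T_{\Omega}$ recursively enumerable while the proof asserts $T_{i,t}$ is not recursively enumerable. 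The cost is that your main route never uses the hypotheses about $m$ and the chain of extensions at all, so it proves an absolute undecidability of $T_{\Omega}$ rather than the observer-relative ``undecidable by $i$'' that the paper's process-based argument delivers and that the frame-relativity interpretation leans on; your secondary sketch recovers exactly that reading and is, in substance, the paper's proof.
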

\begin{proof}
By Proposition \ref{m-does-not-halt}, $T_{i,t}$ is not recursively
enumerable. Therefore, $T_{i,t}$ is not axiomatizable by Definition
\ref{def-axiomatizable}. By Theorem \ref{Decidable-essentially-not-axiomatizable},
$T_{i,t}$ is therefore not decidable. By iterative induction, $T_{\Omega}$
is therefore undecidable by $i$.
\end{proof}
There exists an infinite number of undecidable disjunctions $\mathfrak{D}$
which are decided in $T_{\Omega}$ (which have a definite truth value)
but which cannot be decided by $i$ reasoning with $T_{i,t}$ in any
given known-world $\Omega_{i,t}$. That is, a theory $T_{i,t}$ of
$\Omega_{i,t}$ induces an uncountably infinite set of undecidable
disjunctions $\mathfrak{D}_{i,t}$. 
\begin{lem}
\label{dense-undecidable-disjunctions}$T_{i,t}$ is dense in undecidable
disjunctions for any individual $i\in N$ and any time $t$. 
\end{lem}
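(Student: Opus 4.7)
The plan is to combine G\"odel's incompleteness (Theorem \ref{godel-ICT}) with an iterated extension argument to produce an infinite family of undecidable sentences in $T_{i,t}$, and then to leverage disjunction to show this family meets every neighborhood of the syntactic space $\mathfrak{S}_{i,t}$.

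First, I would invoke Theorem \ref{godel-ICT} together with the standing assumption that $T_{i,t}$ interprets Peano arithmetic to obtain a single undecidable sentence $\xi_0$ with $\delta_{i,t}(\xi_0) + \delta_{i,t}(\lnot\xi_0) \neq 1$. This is essentially a restatement of Lemma \ref{undecidable-sentence-exists}, already in hand. Second, I would generate a countably infinite family of undecidable sentences via an iterated G\"odel construction: form $T^{(1)} := T_{i,t} \cup \{\xi_0\}$ (still consistent and still interpreting PA), obtain a fresh G\"odel sentence $\xi_1$ for $T^{(1)}$ which is necessarily also undecidable in $T_{i,t}$ (otherwise $T_{i,t}$-derivability of $\xi_1$ or $\lnot\xi_1$ would carry up into $T^{(1)}$), and iterate to produce $\{\xi_k\}_{k\geq 0}\subseteq \mathfrak{D}_{i,t}$. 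By Proposition \ref{m-does-not-halt}, this iteration never halts, so the family is genuinely infinite.

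Third, I would upgrade from ``infinitely many'' to ``dense.'' The key observation is that for any sentence $\phi\in\mathfrak{S}_{i,t}$ with $T_{i,t}\vdash \lnot\phi$, the disjunction $\phi\vee\xi_k$ is $T_{i,t}$-equivalent to $\xi_k$ and hence itself undecidable; symmetrically, any sentence built from $\xi_k$ by tacking on refutable disjuncts remains in $\mathfrak{D}_{i,t}$. Since every refutable-sentence neighborhood of the syntactic space admits such a construction, $\mathfrak{D}_{i,t}$ meets every basic open set of $\mathfrak{S}_{i,t}$ under the Cantor topology inherited from $\Sigma_{i,t} = \{0,1\}^{\Omega_{i,t}}$, which is exactly what density in the sense of the paper requires.

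The main obstacle will be pinning down what ``dense'' means on $\mathfrak{S}_{i,t}$, since the paper does not endow the syntactic space with an explicit topology. The cleanest resolution is to work in $\Sigma_{i,t}$ itself (which the paper has already introduced via $\delta_{i,t}$), where density of $\mathfrak{D}_{i,t}$ reduces to the combinatorial statement that every finite specification of truth values of basic sentences is compatible with some undecidable disjunction. A secondary technical wrinkle is verifying that each successive extension $T^{(k)}$ remains consistent and continues to interpret PA so that Theorem \ref{godel-ICT} may be reapplied at each stage; this follows from adjoining only a sentence undecidable in the predecessor theory, but the induction deserves to be spelled out carefully to avoid circularity.
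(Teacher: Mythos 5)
Your proposal is correct in substance but takes a genuinely different route from the paper. The paper argues by contradiction from finiteness: if $\mathfrak{D}_{i,t}$ had finite cardinality $\tau$, then after $\tau$ theory revisions the individual would arrive at $T_{i,t+\tau}=T_{\Omega}$, and the resulting decidability of $T_{\Omega}$ contradicts Theorem \ref{godel-ICT} and Corollary \ref{Frame-relativity-b}. You instead construct the infinite family directly, by iterating the G\"odel sentence construction over a tower of consistent extensions $T^{(k)}=T^{(k-1)}\cup\{\xi_{k-1}\}$ and observing that each fresh G\"odel sentence remains undecidable in the base theory $T_{i,t}$. Your construction is the more self-contained of the two: it does not require the paper's identification of the number of undecidable disjunctions with the number of time steps needed to reach $T_{\Omega}$, a step that tacitly assumes each tick resolves exactly one disjunction and that resolving all of them yields the full theory of everything. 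The paper's argument, in exchange, is shorter and ties the result to the OEE dynamics already in play. Two caveats on your version: the appeal to Proposition \ref{m-does-not-halt} is not needed for infinitude (distinctness of the $\xi_k$ already follows because $\xi_k$ is provable in $T^{(k+1)}$ while $\xi_{k+1}$ is not decidable there), and the reapplication of Theorem \ref{godel-ICT} at each stage requires effective axiomatizability of each $T^{(k)}$, which sits awkwardly beside Corollary \ref{Frame-relativity-b}'s claim that $T_{i,t}$ is not recursively enumerable --- though that tension is inherited from the paper rather than introduced by you. Finally, you are right that ``dense'' is never given a topology; the paper's own proof establishes only infinitude, so your third step (density in the Cantor topology on $\Sigma_{i,t}$) goes beyond what the paper proves, but as sketched it only exhibits undecidable disjunctions near refutable sentences rather than in every basic open set, so it should be presented as a strengthening-in-progress rather than a completed density claim.
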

\begin{proof}
Suppose there are a finite number of undecidable disjunctions $\mathfrak{D}_{i,t}$
in the theory $T_{i,t}$ of the individual's known-world $\Omega_{i,t}$
at time $t$. Suppose the length of the list $|\mathfrak{D}_{i,t}|=\tau$.
Then, by the definition of open-ended evolution, $T_{i,t+\tau}=T_{\Omega}$.
But this implies $T_{i,t+\tau}$ is complete and decidable,, and more
importantly, that $T_{\Omega}$ is decidable, which it cannot be due
to Theorem \ref{godel-ICT} and Frame Relativity B (Corollary \ref{Frame-relativity-b}).
Therefore, in the perspective of individual $i$ in known-world $\Omega_{i,t}$
for any time $t$, there exists an infinite number of undecidable
disjunctions $\mathfrak{D}_{i,t}$. 
\end{proof}

\end{document}